\documentclass[runningheads]{llncs}

\usepackage{eccv}

\usepackage{eccvabbrv}

\usepackage{graphicx}
\usepackage{booktabs}

\usepackage{hyperref}

\usepackage{orcidlink}

\usepackage{graphicx}
\usepackage{amssymb}
\usepackage{amsmath}
\usepackage{dsfont}
\usepackage{multirow}
\usepackage{float}
\usepackage{lipsum}
\usepackage{nicefrac}
\usepackage{circledsteps}
\usepackage{enumitem}
\usepackage{colortbl}
\usepackage{graphicx}
\usepackage{booktabs}
\usepackage{siunitx}
\usepackage{breqn}
\usepackage{lipsum}
\usepackage{adjustbox}
\usepackage{capt-of}
\usepackage{duckuments}
\usepackage{tikzducks}
\usepackage{tablefootnote}
\usepackage{pifont} %xmark and cmark
\usepackage{makecell}
\usepackage{tikz}
\usepackage{algorithm}
\usepackage{algorithmic}
\usepackage{tcolorbox}
\usepackage{dsfont}

\definecolor{lightgray}{gray}{0.95}
\definecolor{midgray}{gray}{0.55}
\definecolor{steelblue}{HTML}{4D82B7}
\definecolor{davysgrey}{rgb}{0.33, 0.33, 0.33}
\definecolor{LightCyan}{rgb}{0.88,1,1}
\definecolor{ao(english)}{rgb}{0.0, 0.5, 0.0}

\newcommand{\g}[1]{\cellcolor{gray!10}{#1}}

\usepackage[first=0, last=9]{lcg}

\newcommand{\Star}[1]{#1\ensuremath{^*}\kern-\scriptspace}

\usepackage{tikz}

\newcommand{\cmark}{\textcolor{teal}{\checkmark}}
\newcommand{\xmark}{\textcolor{red}{$\times$}}

\tcbset{
  finding/.style={
    enhanced, breakable,
    colback=mintbg, colframe=mintedge,
    boxrule=0.6pt, arc=6pt, 
    drop fuzzy shadow=shadow, 
    left=8pt,right=8pt,top=6pt,bottom=6pt,
    before skip=6pt, after skip=6pt
  }
}

\makeatletter
\DeclareRobustCommand\onedot{\futurelet\@let@token\@onedot}
\def\@onedot{\ifx\@let@token.\else.\null\fi\xspace}

\def\eg{\emph{e.g}\onedot} 
\def\ie{\emph{i.e}\onedot}

\definecolor{algc1}{HTML}{f7d779}
\definecolor{algc2}{HTML}{9fc5fc}

\renewcommand{\algorithmiccomment}[1]{\bgroup\hfill $\triangleright$ ~#1\egroup}

\usepackage{array}
\newcommand{\PreserveBackslash}[1]{\let\temp=\\#1\let\\=\temp}
\newcolumntype{C}[1]{>{\PreserveBackslash\centering}p{#1}}
\newcolumntype{R}[1]{>{\PreserveBackslash\raggedleft}p{#1}}
\newcolumntype{L}[1]{>{\PreserveBackslash\raggedright}p{#1}}

\newcommand{\method}{\textsc{Sum}\xspace}
\newcommand{\methodlong}{\textsc{Surgery \& Merge}\xspace}

\begin{document}

% ---------------------------------------------------------------
% TODO REVIEW: Replace with your title
\title{\method{}: Unified Geometric Surgery on Spatio-Temporal Adaptation Vectors for Federated Class Incremental Learning} 

% TODO REVIEW: If the paper title is too long for the running head, you can set
% an abbreviated paper title here. If not, comment out.
\titlerunning{\methodlong{}}

% TODO FINAL: Replace with your author list. 
% Include the authors' OCRID for the camera-ready version, if at all possible.
\author{
Jaeik Kim\inst{1} \and
Jaeyoung Do\inst{1,2}\thanks{Corresponding author.}
}

% TODO FINAL: Replace with an abbreviated list of authors.
\authorrunning{J.~Kim and J.~Do}
% First names are abbreviated in the running head.
% If there are more than two authors, 'et al.' is used.

% TODO FINAL: Replace with your institution list.

\institute{
AIDAS Lab, $^1$IPAI \& $^2$ECE,
Seoul National University, Seoul, Republic of Korea\\
\email{\{jake630,jaeyoung.do\}@snu.ac.kr}
}

\maketitle

% \begin{abstract}
% Federated Learning (FL) often suffers from spatial interference—client drift from heterogeneous data—while Continual Learning (CL) faces temporal interference, or catastrophic forgetting, as tasks arrive sequentially.
% Federated Continual Learning (FCL) addresses both challenges by learning a global model across distributed clients that continually adapts to evolving tasks, a setting naturally encountered in real-world systems. However, most prior methods treat the spatial and temporal aspects separately, lacking a unified optimization view and often requiring extra data or retraining.
% We reinterpret FCL as a unified spatio-temporal Multi-Task Learning (MTL) problem, where client and task updates are represented as gradient-based adaptation vectors under a shared geometric framework.
% Building on this formulation, we propose \methodlong{} (\method{}), which applies a single principle to both dimensions: symmetric projection for concurrent spatial interference and asymmetric online projection for sequential temporal interference, achieving a provably tighter loss bound than standard aggregation.
% \method{} yields up to 22\% improvement over state-of-the-art methods without data rehearsal or additional communication, and even surpasses the jointly fine-tuned model, demonstrating its efficiency and robustness.
% \end{abstract}

\begin{abstract}

Real-world intelligent systems often require both distributed collaboration across data-isolated clients and continual adaptation to evolving tasks. This setting naturally gives rise to Federated Class Incremental Learning (FCIL), which combines Federated Learning (FL) and Continual Learning (CL). However, their combination introduces two coupled sources of interference: spatial interference from heterogeneous clients and temporal interference from sequential tasks, jointly leading to Spatial–Temporal Catastrophic Forgetting (ST-CF).
Existing approaches typically address spatial and temporal interference with separate mechanisms, often incurring additional client-side computation or communication, while leaving directional interactions among updates during aggregation unregulated.
In this paper, we reinterpret FCIL as a unified multi-task learning problem, where both client and task updates are represented as adaptation vectors in a shared parameter space. Based on this view, we propose \methodlong{} (\method{}), a purely server-side framework that performs geometric surgery on adaptation vectors during aggregation. Spatial \method{} mitigates client-level interference within each round, while causal online temporal \method{} removes cross-task interference over time without additional client-side computation, communication, or memory beyond standard federated training.
Empirically, \method{} achieves up to 22\% improvement over prior FCIL methods across diverse vision and language benchmarks while remaining robust to unreliable clients and maintaining computational efficiency.
\end{abstract}

\section{Introduction}
\begin{figure}
    \centering
    \includegraphics[width=0.9\linewidth]{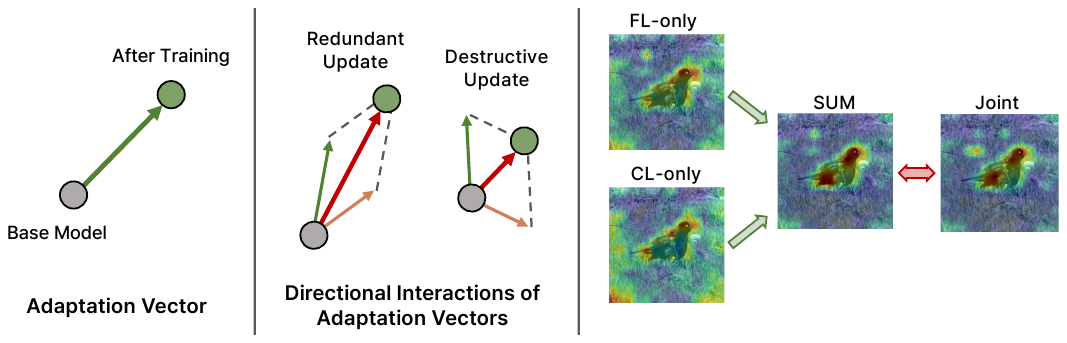}
\caption{
\textbf{Left:} Client or task updates during FCIL are represented as adaptation vectors in parameter space from the base model.
\textbf{Middle:} Directional interactions among these vectors may lead to redundant or destructive updates.
\textbf{Right:} Rather than resolving these interactions along a single axis (FL-only or CL-only), \method{} unifies both axes and regulates them, producing representations comparable to joint training.
}
    \label{fig:upload_cost}
\end{figure}
\label{sec:intro}
Federated Learning (FL) enables collaboration across distributed clients without sharing private data~\cite{fedavg,yang2023fedfed}, while Continual Learning (CL) allows models to adapt to evolving tasks over time~\cite{shin2017continual,ewc}. 
In many real-world settings, these capabilities are required simultaneously: clients must learn sequential tasks from heterogeneous, non-independent and identically distributed (non-IID) data streams while keeping data local. 
Federated Class-Incremental Learning (FCIL) therefore emerges as a practical paradigm for lifelong federated intelligence~\cite{target,dong2022federated}.

However, combining FL and CL introduces tightly coupled sources of interference:
\textit{spatial interference} caused by client drift under non-IID data~\cite{karimireddy2020scaffold}, 
and \textit{temporal interference} caused by catastrophic forgetting across tasks~\cite{mccloskey1989catastrophic}. 
These effects reinforce each other, forming spatial--temporal catastrophic forgetting (ST-CF)~\cite{fedta,fclsurvey}. 
Consequently, mitigating only one axis often leaves residual interference from the other.

During FCIL, updates from heterogeneous clients and sequential tasks are repeatedly combined in the shared parameter space. 
Their directional interactions may produce \emph{redundant} or \emph{destructive} updates (Fig.~\ref{fig:upload_cost}), 
leading to degraded optimization. 
Similar directional effects have been extensively studied in multi-task learning (MTL), where gradients from different tasks may point in incompatible directions, resulting in degraded generalization~\cite{senushkin2023independent,yu2020gradient,liu2021conflict,worsham2020multi}. 
However, existing FCIL methods mitigate ST-CF primarily through FL-oriented alignment~\cite{dong2022federated}, CL-style regularization~\cite{ewc}, replay~\cite{target,ma2022continual,MFCL}, or prototype anchoring~\cite{fedta,pilora}. 
Although some works address spatio-temporal interference jointly~\cite{fot, stamp, fedta, yu2024overcoming}, most methods regulate update generation during local optimization through constraints, auxiliary objectives, or data, while directional interactions among updates during aggregation remain largely unregulated.

In this paper, we revisit FCIL from a unified MTL perspective, where both client and task updates are represented as adaptation vectors—interpretable as gradient-like directions—in the shared parameter space. 
This view exposes their directional interactions explicitly and enables analysis of these interactions at aggregation time while preserving standard parameter communication~\cite{fedavg}. 
From this perspective, ST-CF can be understood as the cumulative effect of unresolved directional interactions: spatial drift introduces heterogeneous update directions across clients, while sequential tasks introduce interfering directions over time.

Building on this insight, we propose \methodlong{} (\method{}), a purely server-side aggregation-time refinement framework that performs geometric surgery on adaptation vectors before merging. 
A parallel spatial \method{} resolves client-level interactions within each round, while a causal online temporal \method{} regulates cross-task interactions over time. 
Importantly, \method{} operates entirely at the server during aggregation, introducing no additional client computation, communication, or memory beyond standard federated training. 
Extensive experiments on diverse vision and language benchmarks show that \method{} improves final averaged accuracy by up to 22\% over prior FCIL methods, remains robust under malicious-client settings, and maintains competitive computational overhead compared to existing FCIL approaches.

\begin{table*}[t]
\centering
\footnotesize
\setlength{\tabcolsep}{4pt}
\renewcommand{\arraystretch}{1.15}
\caption{
\textbf{Comparison with representative FCIL methods.}
}
\label{tab:fcl_comparison}

\adjustbox{max width=0.975\textwidth}{
\begin{tabular}{lcccc|cc}
\toprule
\textbf{Method}
& \multicolumn{4}{c|}{\textbf{Client-side overhead}}
& \multicolumn{2}{c}{\textbf{Property}} \\

\cmidrule(lr){2-5} \cmidrule(lr){6-7}

& \textbf{Extra objective}
& \textbf{Replay}
& \textbf{Extra memory}
& \textbf{Training modification}
& \textbf{Server-only}
& \textbf{Interaction modeling} \\

\midrule

GLFC~\cite{dong2022federated}
& \cmark & \cmark & \cmark & \cmark & \xmark & \xmark \\

TARGET~\cite{target}
& \cmark & \cmark & \cmark & \cmark & \xmark & \xmark \\

MFCL~\cite{MFCL}
& \cmark & \cmark & \xmark & \cmark & \xmark & \xmark \\

PILoRA~\cite{pilora}
& \xmark & \xmark & \cmark & \cmark & \xmark & \xmark \\

FOT~\cite{fot} 
& \xmark & \xmark & \cmark & \cmark & \xmark & \xmark \\

STAMP~\cite{stamp} 
& \cmark & \cmark & \cmark & \cmark & \xmark & \xmark \\

LoRM~\cite{lorm} 
& \xmark & \xmark & \cmark & \xmark & \xmark & \xmark \\

\midrule
\textbf{SUM (ours)} 
& \xmark & \xmark & \xmark & \xmark & \cmark & \cmark \\

\bottomrule
\end{tabular}}
\end{table*}

\section{Related Works}
\label{sec:rw} 

\paragraph{Federated Class Incremental Learning.}
Federated Class Incremental Learning (FCIL) combines Federated Learning (FL) and Continual Learning (CL), yet most existing methods address spatial drift and temporal forgetting separately. 
Methods such as GLFC~\cite{dong2022federated}, CFeD~\cite{ma2022continual}, and TARGET~\cite{target} extend FedAvg~\cite{fedavg} with rehearsal or distillation, while parameter-efficient variants (\eg FedWeIT~\cite{yoon2021federated}, PILoRA~\cite{pilora}, LoRM~\cite{lorm}, Fed-CPrompt~\cite{fedcprompt}) introduce task- or client-specific modules. 
Recent FCIL methods also use regularization, replay, stabilization, or distillation to preserve previous-task knowledge~\cite{MFCL,fedta,fedssi,refedplus,li2026feature}.
Other works, including FOT~\cite{fot} and STAMP~\cite{stamp}, reduce interference through projection or gradient-alignment mechanisms during local optimization: FOT constrains updates using activation-derived subspaces at task boundaries, while STAMP enforces gradient alignment during client training. 
As a result, these methods regulate update generation largely during local training or through auxiliary knowledge-preservation mechanisms, but do not explicitly model how client and task updates interact when accumulated on the server. 
In contrast, \method{} operates purely at aggregation time, treating updates as adaptation vectors and explicitly decomposing their directional interactions before merging. 
This design requires no modification to client-side training, since refinement is applied only to vectors produced by standard local training and communication. 
Tab.~\ref{tab:fcl_comparison} summarizes the key differences between \method{} and representative FCIL approaches.

\paragraph{Directional Gradient Interactions.}
In Multi-Task Learning (MTL), directional interactions among task gradients are a well-known source of redundant or destructive interference that can destabilize optimization and degrade generalization. 
Many works therefore mitigate such interference by explicitly manipulating gradients during optimization, including PCGrad~\cite{yu2020gradient}, MGDA~\cite{sener2018multi}, CAGrad~\cite{liu2021conflict}, GEM~\cite{lopez2017gradient}, Aligned-MTL~\cite{senushkin2023independent}, and GPM~\cite{saha2021gradient}. 
These methods stabilize training by projecting gradients or enforcing compatible descent directions across tasks.
In FCIL, STAMP~\cite{stamp} follows a similar philosophy by performing spatio-temporal gradient matching during client optimization.
In contrast, \method{} resolves interference after local training by refining the geometry of communicated adaptation vectors at server aggregation time, rather than relying on centralized optimization-time gradients or modifying client-side training. 
This preserves the standard federated communication and training protocol.

\paragraph{Adaptation Vectors and Model Merging.}
Model merging studies how independently trained task updates can be represented and composed in a shared parameter space. 
In this context, task-specific updates are often represented as \textit{adaptation vectors}, defined as parameter differences between a base model and a task-adapted model. 
Task Arithmetic~\cite{ilharco2022editing} introduced this formulation, later extended by TIES-Merging~\cite{yadav2024ties} and EMR-Merging~\cite{huang2024emr}, and applied to continual (\eg MagMax~\cite{magmax}) and federated settings (\eg FedAWA~\cite{fedawa}, FedMerge~\cite{fedmerge}). 
These approaches focus on parameter selection or weighted aggregation, but do not explicitly decompose directional interactions among coexisting adaptation vectors. 
In FCIL, LoRM~\cite{lorm} computes closed-form spatio-temporal merging weights, yet still treats task vectors primarily as additive components. 
In contrast, \method{} models both client- and task-induced updates as interacting adaptation vectors, explicitly handling within-round client coupling and cross-task accumulation through a unified server-side refinement protocol.

\section{Foundations}
\label{sec:foundations}
\subsection{FCIL as a Unified Multi-Task Learning Problem}

We view Federated Class Incremental Learning (FCIL) through a multi-task learning (MTL) perspective.
Federated Learning corresponds to \emph{spatial MTL}, where heterogeneous clients optimize distinct objectives under shared parameters, while Continual Learning corresponds to \emph{temporal MTL}, where tasks arrive sequentially under the same model~\cite{magmax,saha2021gradient}.
FCIL naturally combines both structures: objectives vary across clients and evolve across tasks within a shared parameter space.
A formal joint objective and its equivalence to this unified MTL view are provided in Appendix~\ref{sup:fcil_mtl}.
Under this view, interference arises when updates from different objectives interact through aggregation, which prior MTL studies attribute largely to directional relationships among gradients~\cite{yu2020gradient,liu2021conflict}.

\subsection{Adaptation Vectors in a Shared Parameter Space}
\label{subsec:gradient_view_FCL}

To formalize the directional perspective inspired by MTL, we analyze update interactions directly in the parameter space. 
In FCIL, clients exchange model parameters rather than gradients, making the parameter space the natural domain for analyzing update interactions.
We therefore represent both client- and task-induced updates as \emph{adaptation vectors} in the parameter space of the shared model. 
Intuitively, an adaptation vector corresponds to the parameter change between two model states (\eg before and after training)~\cite{ilharco2022editing,yadav2024ties}. 
Since these updates are produced by gradient-based optimization, they capture the accumulated effect of gradient directions over local training steps or sequential tasks (Appendix~\ref{sup:proof_prop1}). 
This allows directional interactions to be analyzed through the resulting adaptation vectors, which are actually merged during federated and continual aggregation.
We first define the client and task adaptation vectors.

\paragraph{Client Adaptation (Spatial Axis).}
At communication round $t$, the server broadcasts $\theta_G^t$.  
Each client $i$ performs $K_i$ steps of local training and returns the updated parameters $\theta_i^{t,K_i}$.
We define the client adaptation vector $\tau_i^t$ as:
\begin{equation}
\tau_i^t := \theta_i^{t,K_i} - \theta_G^t.
\label{eq:client_vector_def}
\end{equation}
Using this adaptation vector, standard FedAvg~\cite{fedavg} can be written as
\begin{equation}
\theta_G^{t+1}
= \theta_G^t + \sum_{i \in \mathcal{S}_t} \lambda_i \tau_i^t.
\label{eq:fedavg}
\end{equation}

\paragraph{Task Adaptation (Temporal Axis).}
Let $\theta_G^{(k)}$ denote the global model obtained after completing training rounds on task $k$. 
Across tasks, we define an \emph{accumulated} task adaptation vector $\tau_k$, relative to a common pretrained base $\theta_{\text{pre}}$:
\begin{equation}
\tau_k := \theta_G^{(k)} - \theta_{\text{pre}}.
\label{eq:task_vector_def}
\end{equation}
Rather than measuring updates relative to the immediately preceding model, we anchor all tasks to the same pretrained base. 
This shared reference places adaptation vectors in a common parameter frame, which both satisfies the conditions of our theoretical analysis (Thm.~\ref{thm:main}) and enables explicit regulation of their directional interactions.
As we show in Sec.~\ref{subsec:temporal_sum}, operating on these accumulated vectors is equivalent to resolving the interactions among the underlying true sequential task updates.

\paragraph{Unifying Perspective.}
Both client updates $\tau_i^t$ and task updates $\tau_k$ reside in the same shared parameter space and accumulate through repeated aggregation.
Their directional interactions therefore shape the evolution of the global model.
Across communication rounds, heterogeneous clients introduce diverse update directions, while sequential tasks introduce additional directions over time.
From MTL perspective, when such directional interactions are not explicitly regulated, their accumulated effects can distort shared representations~\cite{yu2020gradient,liu2021conflict}, providing a geometric view of spatial--temporal catastrophic forgetting (ST-CF).
This perspective motivates our proposed \method{} introduced in Sec.~\ref{sec:method}.

\section{Method}
\label{sec:method}

Building on Sec.~\ref{sec:foundations}, we present a unified framework for resolving spatio-temporal interference in Federated Class Incremental Learning (FCIL).  
We first introduce our unified projection principle.
% ---------------------------------------------------------
\begin{figure*}[t]
    \centering    \includegraphics[width=\linewidth]{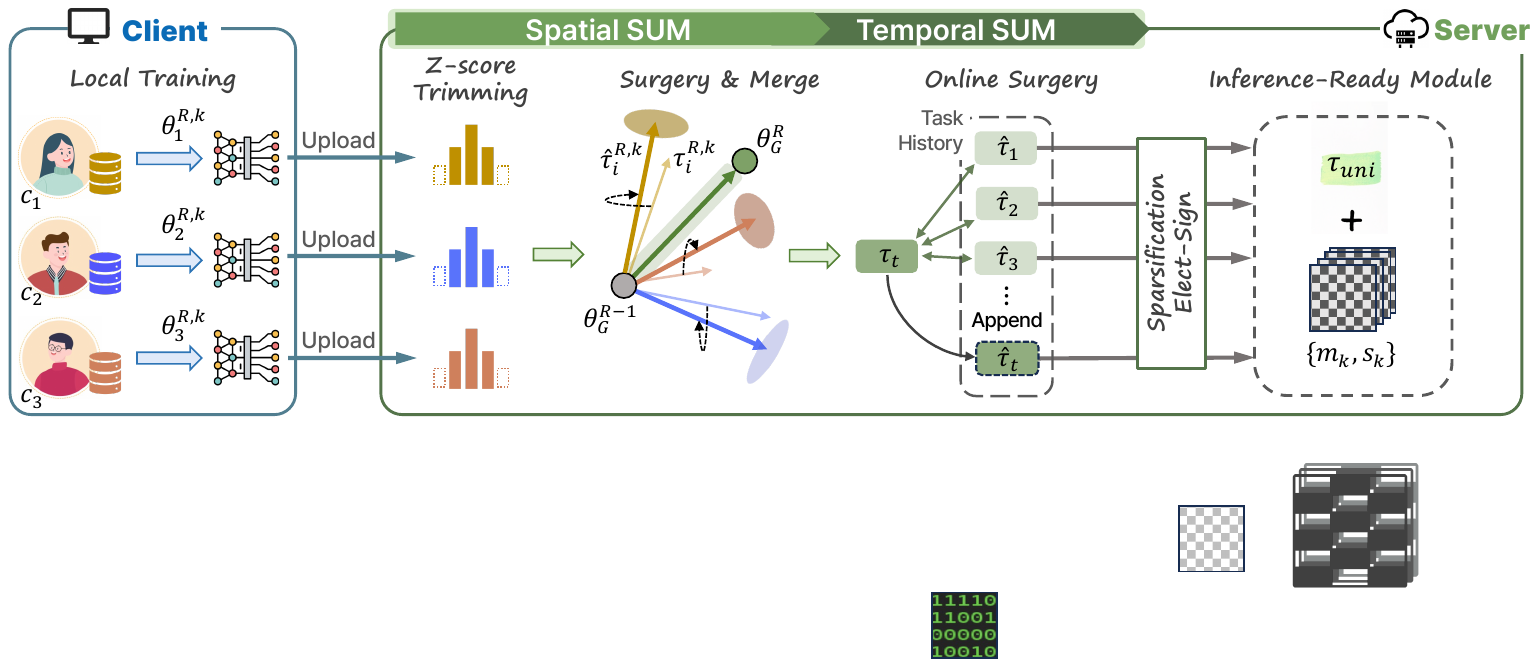}
\caption{\textbf{Overview of \methodlong{} (\method{})}. 
After local training for task $k$ over $R$ communication rounds, clients upload their updated weights to the server. 
The server then applies spatial and temporal \method{} to refine the adaptation vectors and construct an inference-ready module for task $k$. 
Notably, \method{} introduces no additional client-side overhead beyond standard federated model training and communication.}
    \label{fig:main}
\end{figure*}

\subsection{\methodlong{}}
\label{subsec:sum_principle}

% To resolve the directional interactions identified in Sec.~\ref{subsec:gradient_view_FCL}, 
% we propose \methodlong{} (\method{}), 
% a refinement framework applied at aggregation time.

\paragraph{Overall Pipeline.}
As shown in Fig.~\ref{fig:main}, \method{} operates in three stages across tasks.
The server applies Spatial SUM to remove directional interference among client adaptation vectors, and Temporal SUM to regulate interactions with previously learned tasks.
The resulting task vectors are stored as a temporal basis and converted into a compact inference module after each task.
The resulting module is used for inference, while subsequent local training proceeds from the spatially aggregated model.

\paragraph{Core Idea.}
Before introducing our refinement, we denote the standard aggregation result as
\begin{equation}
\theta_{\text{standard}}
=
\theta_{\text{base}}
+
\sum_{i} \lambda_i \mathbf{v}_i ,
\label{eq:standard_merge}
\end{equation}
where $\theta_{\text{base}}$ denotes the anchor model (\eg $\theta_G^t$ or $\theta_{\text{pre}}$) and $\mathbf{v}_i$ denotes an adaptation vector produced during training (\eg a client update $\tau_i^t$ or a task update $\tau_k$).
In FCIL, the server aggregates multiple adaptation vectors produced by heterogeneous clients or sequential tasks using Eq.~\ref{eq:standard_merge}.
When these vectors are combined through naive summation, their directional interactions may produce \emph{redundant} updates or \emph{destructive} updates, which distort the resulting representation.

Our goal is therefore to regulate these directional interactions during aggregation.
Given adaptation vectors $\{\mathbf{v}_i\}$, 
\method{} refines each vector by removing components aligned with others, retaining only its independent contribution in the shared parameter space.
Formally, we compute a refined vector
\begin{equation}
\hat{\mathbf{v}}_i 
= \mathbf{v}_i 
- \sum_{\substack{j=1 \\ j \neq i}}^{n} 
\frac{\mathbf{v}_i \cdot \mathbf{v}_j}{\|\mathbf{v}_j\|_2^2} \, \mathbf{v}_j .
\label{eq:general_projection}
\end{equation}
The resulting model is then obtained as
\begin{equation}
\theta_{\method{}} 
= \theta_{\text{base}} 
+ \sum_{i=1}^{n} \lambda_i \hat{\mathbf{v}}_i .
\label{eq:general_merge}
\end{equation}
Intuitively, this refinement reduces directional interference among adaptation vectors, preventing overlapping or conflicting updates from dominating the aggregated model.
Importantly, \method{} does not discard entire client or task updates; it removes only the components that are directionally coupled with other vectors, thereby preserving residual client- or task-specific directions.
This interpretation is empirically supported by our surgery-target ablation in Fig.~\ref{fig:surgery_ablation}, where one-sided suppression of selected interactions is insufficient, whereas \method{} preserves useful residual directions and achieves the best performance.
Because the refinement operates directly in the parameter space, 
\method{} can be applied once at aggregation on the server side without modifying client optimization.
Unlike optimization-time methods~\cite{target,dong2022federated,fot}, 
it requires no gradient storage or additional forward/backward passes.

We further provide a local first-order analysis showing when \method{} can
improve the aggregation-level training descent bound under standard smoothness
assumptions~\cite{yu2020gradient,liu2021conflict}.

\begin{theorem}
Let $\theta_{\method{}}$ (Eq.~\ref{eq:general_merge}) and $\theta_{\text{standard}}$ (Eq.~\ref{eq:standard_merge}) be models obtained from refined and standard merging, respectively.  
If all $\mathbf{v}_i$ share the common base $\theta_{\text{base}}$ and task losses are convex with $L$-Lipschitz gradients at $\theta_{\text{base}}$, then:
\[
\mathcal{L}(\theta_{\method{}}) 
\leq 
\mathcal{L}(\theta_{\text{standard}}).
\]
\label{thm:main}
\end{theorem}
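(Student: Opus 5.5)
The plan is a convexity comparison between the two merged models, working directly with $\mathcal{L}$ rather than with quadratic upper bounds. Write $\mathbf{d}_{\text{std}}=\sum_i\lambda_i\mathbf{v}_i$ and $\mathbf{d}_{\method{}}=\sum_i\lambda_i\hat{\mathbf{v}}_i$. By Eq.~\ref{eq:general_projection},
\[
\theta_{\text{standard}}-\theta_{\method{}}=\boldsymbol{\Delta}:=\sum_i\lambda_i\sum_{j\neq i}\frac{\mathbf{v}_i\cdot\mathbf{v}_j}{\|\mathbf{v}_j\|_2^2}\,\mathbf{v}_j .
\]
Convexity of $\mathcal{L}=\sum_i\lambda_i\mathcal{L}_i$ gives the exact inequality
\[
\mathcal{L}(\theta_{\text{standard}})\;\ge\;\mathcal{L}(\theta_{\method{}})+\nabla\mathcal{L}(\theta_{\method{}})^{\top}\boldsymbol{\Delta}.
\]
Hence the theorem reduces to a single sign condition, $\nabla\mathcal{L}(\theta_{\method{}})^{\top}\boldsymbol{\Delta}\ge 0$. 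The rest of the argument establishes this condition.

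\textbf{Step 1: gradient at the base.} I would use the common-base hypothesis together with the gradient interpretation of adaptation vectors from Appendix~\ref{sup:proof_prop1}. That interpretation gives $\mathbf{v}_i=-\eta\,\mathbf{g}_i$, with $\mathbf{g}_i=\nabla\mathcal{L}_i(\theta_{\text{base}})$ and all gradients evaluated at the same $\theta_{\text{base}}$. This is exactly where the shared-base assumption is used. It yields
\[
\nabla\mathcal{L}(\theta_{\text{base}})=-\tfrac{1}{\eta}\,\mathbf{d}_{\text{std}}.
\]

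\textbf{Step 2: transfer to $\theta_{\method{}}$.} $L$-Lipschitz gradients give
\[
\nabla\mathcal{L}(\theta_{\method{}})=\nabla\mathcal{L}(\theta_{\text{base}})+\mathbf{e},\qquad \|\mathbf{e}\|\le L\|\mathbf{d}_{\method{}}\|.
\]
The sign condition therefore becomes
\[
-\tfrac1\eta\,\mathbf{d}_{\text{std}}^{\top}\boldsymbol{\Delta}+\mathbf{e}^{\top}\boldsymbol{\Delta}\ge0 .
\]
The second term is $O(L\eta)\cdot\|\mathbf{d}\|\,\|\boldsymbol{\Delta}\|$. In the local regime of the theorem (first-order, $\eta L$ small) it is dominated by the first term, which is $O(1/\eta)\cdot\|\mathbf{d}\|\,\|\boldsymbol{\Delta}\|$. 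So it suffices to show $\mathbf{d}_{\text{std}}^{\top}\boldsymbol{\Delta}\le 0$.

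\textbf{Step 3: the main obstacle.} I would expand the inner product as
\[
\mathbf{d}_{\text{std}}^{\top}\boldsymbol{\Delta}=\sum_i\sum_{j\neq i}\lambda_i\,(\mathbf{v}_i\cdot\mathbf{v}_j)\,\frac{\mathbf{d}_{\text{std}}\cdot\mathbf{v}_j}{\|\mathbf{v}_j\|^2}.
\]
Each summand has the sign of the product of the pairwise cosine $\mathbf{v}_i\cdot\mathbf{v}_j$ and the alignment $\mathbf{d}_{\text{std}}\cdot\mathbf{v}_j$ of $\mathbf{v}_j$ with the merged step. In the interference regime the theorem targets, each $\mathbf{v}_j$ is aligned with the merged descent direction, so $\mathbf{d}_{\text{std}}\cdot\mathbf{v}_j>0$. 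The sign then hinges entirely on the pairwise terms. Conflicting pairs ($\mathbf{v}_i\cdot\mathbf{v}_j<0$) contribute the correct sign: removing those components strictly adds descent. Redundant pairs ($\mathbf{v}_i\cdot\mathbf{v}_j>0$) contribute the wrong sign.

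I do not expect to prove the sign for arbitrary vectors. The pairwise terms genuinely decide it, and with all-parallel vectors, $\hat{\mathbf{v}}_i$ can even reverse. So I would state the interference (conflict-dominated) condition explicitly as the regime in which the hypotheses are read, following the PCGrad-style analyses~\cite{yu2020gradient,liu2021conflict}, and then verify the sum is nonpositive there. The result is then assembled from the convexity inequality plus Steps 1--3.
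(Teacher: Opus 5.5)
Your convexity reduction to the sign condition $\nabla\mathcal L(\theta_{\method{}})^{\top}\boldsymbol\Delta\ge 0$ is correct, but the proposal never establishes that condition: Step~3 replaces it with a hypothesis. If ``conflict-dominated'' means $\mathbf v_i\cdot\mathbf v_j<0$ for every pair and $\mathbf d_{\text{std}}\cdot\mathbf v_j>0$ for every $j$, then each summand is negative and Step~3 is one line. If it means the weighted sum is nonpositive, it is the conclusion itself. Either way you are proving a different theorem.

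Your suspicion that nothing more is possible is right: as literally stated, the theorem is false. Take $\mathcal L_1=\mathcal L_2=\tfrac12\|\theta-\theta^\star\|^2$ and identical updates $\mathbf v_1=\mathbf v_2=-\eta(\theta_{\text{base}}-\theta^\star)\neq 0$ (one local gradient step on identical data). Then $\hat{\mathbf v}_1=\hat{\mathbf v}_2=0$, so $\theta_{\method{}}=\theta_{\text{base}}$. Meanwhile $\theta_{\text{standard}}$ strictly lowers the loss whenever $0<(\lambda_1+\lambda_2)\eta<2$. With three identical vectors, $\hat{\mathbf v}_i=-\mathbf v_i$ and \method{} moves uphill.

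Step~2 has a separate hole even under your extra hypothesis: $\mathbf d_{\text{std}}^{\top}\boldsymbol\Delta\le 0$ does not suffice. The term $\mathbf e^{\top}\boldsymbol\Delta$ must be beaten by the actual value of $-\tfrac1\eta\mathbf d_{\text{std}}^{\top}\boldsymbol\Delta$, not by its Cauchy--Schwarz scale. Under conflict-plus-alignment, the cosine $\kappa$ between $\boldsymbol\Delta$ and $-\mathbf d_{\text{std}}$ can be arbitrarily small: two nearly orthogonal, mildly conflicting vectors with norm ratio $r$ give $\kappa\approx 2r/(1+r^2)$. Now take a rank-one convex quadratic of curvature $L$, oriented along the negative-eigenvalue eigenvector of $\mathbf d_{\text{std}}\boldsymbol\Delta^{\top}+\boldsymbol\Delta\mathbf d_{\text{std}}^{\top}$. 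It makes \method{} strictly worse whenever $\kappa<\eta L/(2-\eta L)$. You therefore need an explicit, configuration-dependent margin such as $-\mathbf d_{\text{std}}^{\top}\boldsymbol\Delta\ge\eta L\|\mathbf d_{\method{}}\|\|\boldsymbol\Delta\|$. That margin must also absorb the $O(\eta^2)$ residual in $\mathbf v_i\approx-\eta\mathbf g_i$.

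The paper's proof takes a different route and never uses convexity. For two vectors, it uses the gradient interpretation of adaptation vectors (Appendix~\ref{sup:proof_prop1}) with $\lambda_j=t^*/t_j^*$. This identifies $\theta_{\text{standard}}$ with one multi-task gradient step and $\theta_{\method{}}$ with one symmetric PCGrad step at $\theta_{\text{base}}$. It then cites the guarantee $\mathcal L(\theta^{\text{PCGrad}})\le\mathcal L(\theta^{\text{MT}})$ of Yu et al.~\cite{yu2020gradient}. That guarantee rests on hypotheses also absent from the statement: a curvature lower bound, strong conflict $\cos\phi_{12}\le-2\|\mathbf g_1\|\|\mathbf g_2\|/(\|\mathbf g_1\|^2+\|\mathbf g_2\|^2)$, and a step size $t\ge 2/(\ell-\xi L)$, hence $tL\ge 2$.

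You cannot borrow it to close Step~3, because the regimes are disjoint. Write $\mathbf u_j=\lambda_j\mathbf v_j$. Your alignment condition for two vectors reads $-\mathbf u_1\cdot\mathbf u_2<\min(\|\mathbf u_1\|^2,\|\mathbf u_2\|^2)$. PCGrad's conflict condition instead forces $-\mathbf u_1\cdot\mathbf u_2$ to be at least the harmonic mean of $\|\mathbf u_1\|^2$ and $\|\mathbf u_2\|^2$, hence at least their minimum. Also, $tL\ge2$ is incompatible with your small-$\eta L$ regime.

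Your argument is first-order: it uses exactly the stated convexity and smoothness, handles general $n$, and makes the missing sign condition explicit, but it can only certify mild conflict. The paper's argument covers $n=2$ under strong conflict, where the gain comes from the curvature term rather than from first-order descent. Neither establishes the theorem as written. To turn your proposal into a correct conditional result, state the conflict-plus-alignment condition and the margin as explicit hypotheses.
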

See Supplementary Sec.~\ref{app:proof_main} for the proof.
This result should be interpreted as an aggregation-level local training-objective
statement under the stated common-base and smoothness assumptions, rather than
as a universal test-loss or global convergence guarantee. It formalizes a sufficient
condition under which reducing directional interference yields a tighter local
descent bound than naive aggregation. Generalization behavior is therefore
evaluated empirically through held-out Final Averaged Accuracy (FAA),
forgetting, sharpness, and class-margin analyses (Sec.~\ref{sec:exp}, Appendix~\ref{sup:additional_experiments}).

% ---------------------------------------------------------
\subsection{Spatial \method{}}
\label{subsec:spatial_sum}

In FL, client adaptation vectors $\tau_i^t$ coexist within each communication round and often diverge due to non-IID data.  
Spatial \method{} instantiates Eq.~\ref{eq:general_merge} on the set of client adaptation vectors at round $t$:
\begin{equation}
\theta_{G}^{t+1}
=
\theta_{G}^{t}
+
\lambda_S \sum_{i \in \mathcal{S}} \hat{\tau}_i^t,
\label{eq:spatial_sum}
\end{equation}
where $\lambda_S$ is a scaling factor for spatial aggregation. Before applying Eq.~\ref{eq:spatial_sum} to the client adaptation vectors, 
we suppress extreme coordinates via Z-score trimming. 
For each coordinate index $p$, we zero entries with $|z[p]| > z_{\text{thr}}$, 
where $z[p] = (\tau[p]-\mu)/\sigma$ denotes the Z-score of the $p$-th parameter coordinate.
Without this step, outlier coordinates may disproportionately dominate the inner products used in Eq.~\ref{eq:general_projection}. 
This can make the projection excessively large or even flip its direction, resulting in unstable refinement updates (Tab.~\ref{tab:ablation}, Appendix~\ref{supsec:z_score_trimming}).
Classifier heads are merged using RegMean~\cite{jin2022dataless}.  

\paragraph{System and Overhead.}
Clients perform standard local training and upload model updates exactly as in conventional FL; all \method{} operations are executed on the server, introducing no additional client-side computation or communication.

% ---------------------------------------------------------
\subsection{Temporal \method{}}
\label{subsec:temporal_sum}

After the final FL communication round $R$ for task $k$, the server obtains $\theta_G^{R}=\theta_G^{(k)}$ via the Spatial \method{} and proceeds to Temporal \method{}.  
Let the true sequential updates be
\(
\Delta_k = \theta_G^{(k)} - \theta_G^{(k-1)}.
\)
Although $\{\Delta_k\}$ capture the true learning dynamics of task $k$, each update is defined relative to a different base model $\theta_G^{(k-1)}$. 
Applying Eq.~\ref{eq:general_projection} directly to these updates therefore violates the common-base assumption required by Thm.~\ref{thm:main}, which assumes that all vectors are defined relative to a shared base model $\theta_{\text{base}}$. 
To satisfy this condition, we re-anchor task adaptations to a common base model:
\begin{equation}
\tau_k = \theta_G^{(k)} - \theta_{\text{pre}}.
\label{eq:task_vector_def_re}
\end{equation}

\paragraph{Online Surgery.}
As tasks arrive sequentially in FCIL, we perform an online projection that incrementally handles directional interactions along previously stored task adaptation vectors.
\begin{equation}
\hat{\tau}_k 
=
\tau_k 
-
\sum_{j=1}^{k-1}
\frac{\tau_k \cdot \hat{\tau}_j}{\|\hat{\tau}_j\|_2^2}
\hat{\tau}_j,
\quad
(\hat{\tau}_1 = \tau_1).
\label{eq:temporal_projection}
\end{equation}
This online form is a sequential implementation of the same projection principle used in Eq.~\ref{eq:general_projection}. 
As discussed in Appendix~\ref{app:proof_main_temporal}, it admits the same local descent-bound
interpretation under the stated assumptions.
The following proposition shows that performing projection on the accumulated vectors $\{\tau_k\}$ is equivalent to resolving directional interactions among the true sequential task updates $\{\Delta_k\}$ (See Appendix~\ref{app:proof_prop_temporal} for the proof).

\begin{proposition}
Applying online surgery to $\{\tau_k\}$ produces the identical orthogonal vectors as applying the same procedure directly to $\{\Delta_k\}$.
\label{prop:temporal_equivalence}
\end{proposition}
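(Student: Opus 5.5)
The plan is to recognize Eq.~\ref{eq:temporal_projection} as classical Gram--Schmidt orthogonalization and to use the fact that Gram--Schmidt outputs depend only on the nested chain of spans of the inputs. Throughout I take $\theta_G^{(0)}=\theta_{\text{pre}}$, so that telescoping gives
\begin{equation*}
\tau_k=\sum_{j=1}^{k}\Delta_j, \qquad \tau_k=\tau_{k-1}+\Delta_k .
\end{equation*}
Let $\hat{\Delta}_k$ denote the output of the same online procedure applied to $\{\Delta_k\}$, with $\hat{\Delta}_1=\Delta_1$. The goal is to prove $\hat{\tau}_k=\hat{\Delta}_k$ for every $k$.

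\textbf{Step 1: the online surgery is an orthogonal projection.} I will show by induction on $k$ that the vectors $\hat{\tau}_1,\dots,\hat{\tau}_k$ are pairwise orthogonal and that
\begin{equation*}
\mathrm{span}\{\hat{\tau}_1,\dots,\hat{\tau}_k\}=\mathrm{span}\{\tau_1,\dots,\tau_k\}=:V_k .
\end{equation*}
Assume this holds up to $k-1$. Since $\hat{\tau}_1,\dots,\hat{\tau}_{k-1}$ form an orthogonal spanning set of $V_{k-1}$, the subtracted sum in Eq.~\ref{eq:temporal_projection} is exactly the orthogonal projection $P_{V_{k-1}}\tau_k$. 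Hence
\begin{equation*}
\hat{\tau}_k=(I-P_{V_{k-1}})\,\tau_k .
\end{equation*}
This vector is orthogonal to $V_{k-1}$, and together with $\hat{\tau}_1,\dots,\hat{\tau}_{k-1}$ it spans $V_k$, which closes the induction. The identical argument gives $\hat{\Delta}_k=(I-P_{W_{k-1}})\,\Delta_k$ with $W_k:=\mathrm{span}\{\Delta_1,\dots,\Delta_k\}$.

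\textbf{Step 2: the two families generate the same spans.} The map from $(\Delta_1,\dots,\Delta_k)$ to $(\tau_1,\dots,\tau_k)$ is given by a lower-triangular all-ones matrix. This matrix is invertible, with inverse $\Delta_j=\tau_j-\tau_{j-1}$, so $V_k=W_k$ for every $k$.

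\textbf{Step 3: conclude.} Since $\tau_{k-1}\in V_{k-1}$, we have $(I-P_{V_{k-1}})\tau_{k-1}=0$. Therefore
\begin{equation*}
\hat{\tau}_k=(I-P_{V_{k-1}})(\tau_{k-1}+\Delta_k)=(I-P_{W_{k-1}})\,\Delta_k=\hat{\Delta}_k .
\end{equation*}
As a consequence, the stored temporal basis and everything built from it coincide for the two choices of input.

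\textbf{Main obstacle.} The delicate point is Step 1 in degenerate cases, where some $\hat{\tau}_j=0$ because $\tau_j$ already lies in $V_{j-1}$; then Eq.~\ref{eq:temporal_projection} divides by zero. I would handle this with the standard convention that terms with $\hat{\tau}_j=0$ are omitted. The remaining nonzero $\hat{\tau}_j$ are still an orthogonal spanning set of $V_{k-1}$, so the projection identity survives, and Step 2 guarantees that the degenerate indices are the same for both sequences.
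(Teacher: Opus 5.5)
Your proof is correct and is essentially the paper's argument. Both rest on linearity of the projection together with the fact that the accumulated prefix $\tau_{k-1}$ lies in the span of the earlier refined vectors, so only the residual of $\Delta_k$ survives; the paper computes $k=2$ explicitly and then inducts on $\hat{\tau}_j=\hat{\Delta}_j$. Your version is more complete: it proves the Gram--Schmidt facts the paper's induction uses tacitly (orthogonality of the $\hat{\tau}_j$, so the subtracted sum is a genuine orthogonal projection, and $V_k=W_k$), and it also handles the degenerate case $\hat{\tau}_j=0$.
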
 
% The refined vectors $\{\hat{\tau}_k\}$ form a compact temporal basis $\mathcal{B}$.

\paragraph{System and Overhead.}
Temporal \method{} is executed entirely on the server using the uploaded model weights after each task. 
The server computes $\tau_k$, performs the projection, and stores the vectors $\{\hat{\tau}_k\}$.

% ---------------------------------------------------------
\subsection{Inference-ready Module Construction}
\label{subsec:deployment}

After temporal \method{}, each $\hat{\tau}_k$ captures the interference-free adaptation of task $k$.  
While client training continues from the global model $\theta_G^{(k)}$ for the next task, we construct compact inference modules for deployment.  
Specifically, we convert $\{\hat{\tau}_k\}$ into lightweight task-specific modules using a modified EMR-Merging~\cite{huang2024emr}.

\paragraph{Step 1: Sparsification.}
Each $\hat{\tau}_k$ is sparsified via magnitude-based top-$k$ selection, retaining the largest $k_{pct}$ fraction of coordinates. 
Let $\Omega_k$ denote the set of indices corresponding to the largest $k_{pct}$ fraction of $|\hat{\tau}_k|$:
\[
\bar{\tau}_k[p] =
\begin{cases}
\hat{\tau}_k[p], & p \in \Omega_k, \\
0, & \text{otherwise}.
\end{cases}
\]
In practice, we find that retaining only these high-magnitude coordinates is sufficient to preserve the dominant task-specific adaptation patterns (Fig.~\ref{fig:param_sensitivity}).

\paragraph{Step 2: Elect-Sign.}
From $\bar{\mathcal{B}}=\{\bar{\tau}_1,\ldots,\bar{\tau}_k\}$, we derive a unified direction $\tau_{\text{uni}}$ using a coordinate-wise sign-consensus rule.  
For each parameter coordinate, we determine the majority-supported sign across tasks and assign its magnitude using the largest aligned value.  
Coordinates whose signs disagree across tasks are set to zero.  
This retains directions that are consistently aligned across tasks while removing conflicting updates.

\paragraph{Step 3: Task-Specific Activation.}
To preserve task-specific behavior, each task activates a subset of the unified direction.  
We construct a binary mask $m_k$ where $m_k[p]=1$ if $\tau_k[p]\tau_{\text{uni}}[p] > 0$ and $0$ otherwise.  
A scalar factor
$
s_k = \frac{\|\tau_k\|_1}{\|m_k \odot \tau_{\text{uni}}\|_1 + \varepsilon}
$
rescales the activation to match the original task magnitude, where $\varepsilon$ is a small constant for numerical stability.
The inference model is reconstructed as:
$
\theta_{\text{infer}}^{(k)} 
= \theta_{\text{pre}} + m_k \odot (s_k \cdot \tau_{\text{uni}}).
$
The server distributes the compact module $(\tau_{\text{uni}}, m_k, s_k)$, while the shared pretrained backbone $\theta_{\text{pre}}$ is already available on clients.

\section{Experiments}
\label{sec:exp}

% In this section, we evaluate \method{} across domains, datasets, and backbones, compare it with state-of-the-art baselines, and analyze its components and learning dynamics.

\subsection{Experimental Settings}
\label{subsec:settings}

We follow a standard FCIL setup~\cite{lorm,hgp}, using the same pure ImageNet-pretrained ViT-B/16~\cite{vit} backbone for vision tasks (Sec.~\ref{subsec:vision_domain}) and T5-Small~\cite{t5} for language tasks (Sec.~\ref{subsec:language_domain}). 
Unless stated otherwise, experiments use 10 clients, 10 sequential class-split tasks, and 5 communication rounds per task. 
Client data follow a Dirichlet label-imbalance partition with $\beta \in \{0.5,0.1,0.05\}$ or $\{1.0,0.5,0.2\}$ depending on dataset scale. 
We report Final Averaged Accuracy (FAA) after the last task. 
For \method{}, we set $z_{\text{thr}}=4.5$, $\lambda_S=0.4$, and $k_{pct}=5\%$. 
A centralized \textit{Joint} baseline is included.
All baselines are evaluated under the same FCIL protocol~\cite{lorm}, using the same backbone, task splits, client partitions, communication rounds, and data-access constraints.
Method-specific hyperparameters are selected following the original recommendations and reported in Appendix ~\ref{supsec:hyperparameters}.

\subsection{Vision Domain}
\label{subsec:vision_domain}

\renewcommand{\arraystretch}{1.1}
\begin{table*}[t]
\caption{\textbf{Evaluation on various vision-domain datasets in FAA ($\mathbf{\uparrow}$).}
\textbf{Bold} indicates the best and \underline{underlined} the second-best results.}
\centering
\setlength{\tabcolsep}{0.15em}
\adjustbox{max width=\textwidth}{
\begin{tabular}{
@{}l|
ccc|ccc|ccc|ccc|ccc|ccc@{}
}
\toprule
& \multicolumn{3}{c|}{\textbf{CIFAR-100}} &
  \multicolumn{3}{c|}{\textbf{ImageNet-R}} &
  \multicolumn{3}{c|}{\textbf{ImageNet-A}} &
  \multicolumn{3}{c|}{\textbf{EuroSAT}} &
  \multicolumn{3}{c|}{\textbf{CARS-196}} &
  \multicolumn{3}{c}{\textbf{CUB-200}} \\
\cmidrule(lr){2-19}
\textbf{Joint} &
\multicolumn{3}{c|}{$\boldsymbol{92.75}$} &
\multicolumn{3}{c|}{$\boldsymbol{84.02}$} &
\multicolumn{3}{c|}{$\boldsymbol{54.64}$} &
\multicolumn{3}{c|}{$\boldsymbol{98.42}$} &
\multicolumn{3}{c|}{$\boldsymbol{85.62}$} &
\multicolumn{3}{c}{$\boldsymbol{86.04}$} \\
\midrule
\textbf{Distrib. $\boldsymbol{\beta}$} &
$0.5$ & $0.1$ & $0.05$ &
$0.5$ & $0.1$ & $0.05$ &
$1.0$ & $0.5$ & $0.2$ &
$1.0$ & $0.5$ & $0.2$ &
$1.0$ & $0.5$ & $0.2$ &
$1.0$ & $0.5$ & $0.2$ \\
\midrule
EWC & $78.46$ & $72.42$ & $64.51$ & $58.93$ & $48.15$ & $43.68$ & $10.86$ & $10.07$ & $8.89$ &
$64.12$ & $59.30$ & $56.52$ & $19.55$ & $18.02$ & $18.29$ & $31.46$ & $29.60$ & $27.89$ \\

LwF & $62.87$ & $55.56$ & $47.09$ & $54.03$ & $41.02$ & $46.07$ & $8.89$ & $8.89$ & $7.90$ &
$31.91$ & $21.26$ & $31.42$ & $20.84$ & $22.72$ & $31.76$ & $25.25$ & $21.11$ & $18.54$ \\

FisherAvg & $76.10$ & $74.43$ & $65.31$ & $58.68$ & $50.82$ & $47.33$ & $11.59$ & $11.06$ & $10.14$ &
$58.84$ & $59.94$ & $55.86$ & $26.03$ & $24.60$ & $21.58$ & $30.45$ & $28.39$ & $25.06$ \\

RegMean & $59.80$ & $45.88$ & $39.08$ & $61.18$ & $57.00$ & $55.80$ & $8.56$ & $6.22$ & $4.34$ &
$48.74$ & $51.73$ & $45.27$ & $21.83$ & $20.36$ & $15.92$ & $35.57$ & $32.84$ & $32.83$ \\

CCVR & $79.95$ & $75.14$ & $65.30$ & $70.00$ & $62.60$ & $60.38$ & \underline{$39.50$} & $36.27$ & \underline{$35.94$} &
$64.44$ & $57.93$ & $62.69$ & $38.99$ & $37.81$ & $35.31$ & $62.67$ & $59.48$ & $56.33$ \\

L2P & $83.88$ & $61.54$ & $55.00$ & $42.08$ & $23.85$ & $16.98$ & $20.14$ & $17.31$ & $16.85$ &
$40.63$ & $51.78$ & $45.46$ & $35.49$ & $31.00$ & $20.01$ & $56.23$ & $47.31$ & $38.16$ \\

CODA-P & $82.25$ & $61.82$ & $46.74$ & $61.18$ & $36.73$ & $25.82$ & $18.30$ & $14.48$ & $7.31$ &
$73.38$ & $69.42$ & $66.69$ & $28.04$ & $20.83$ & $14.53$ & $42.53$ & $37.71$ & $29.19$ \\

FedProto & $75.79$ & $70.02$ & $60.55$ & $58.52$ & $47.30$ & $52.93$ & $9.87$ & $9.22$ & $10.01$ &
$58.79$ & $62.85$ & $64.17$ & $26.08$ & $24.55$ & $22.75$ & $30.22$ & $28.27$ & $26.01$ \\

TARGET & $74.72$ & $72.32$ & $62.60$ & $54.65$ & $45.83$ & $41.32$ & $10.27$ & $11.39$ & $10.73$ &
$52.74$ & $52.74$ & $45.11$ & $28.65$ & $27.20$ & $26.13$ & $39.30$ & $38.40$ & $34.79$ \\

PILoRA & $76.48$ & $75.81$ & $74.80$ & $53.67$ & $51.62$ & $49.37$ & $19.62$ & $18.70$ & $20.01$ &
$48.35$ & $32.89$ & $31.22$ & $37.57$ & $37.92$ & $36.95$ & $61.11$ & $60.68$ & \underline{$60.39$} \\

FOT & $79.26$ & $76.45$ & $69.89$ & $68.65$ & $59.92$ & $55.12$ & $18.37$ & $17.25$ & $15.01$ &
$59.58$ & $57.07$ & $48.07$ & $33.53$ & $33.49$ & $30.57$ & $45.88$ & $41.78$ & $41.28$ \\

LoRM & \underline{$86.95$} & \underline{$81.75$} & \underline{$82.76$} & \underline{$72.48$} & \underline{$63.83$} & \underline{$66.45$} & $37.26$ & \underline{$36.34$} & $33.11$ &
\underline{$84.23$} & \underline{$77.26$} & \underline{$81.36$} & \underline{$54.41$} & \underline{$51.87$} & \underline{$48.81$} & \underline{$64.60$} & \underline{$63.67$} & $60.06$ \\
\midrule
\rowcolor{gray!10} \textbf{\method{}} \hspace{-1em} &
$\boldsymbol{97.03}$ & $\boldsymbol{96.93}$ & $\boldsymbol{96.32}$ &
$\boldsymbol{83.80}$ & $\boldsymbol{85.58}$ & $\boldsymbol{86.81}$ &
$\boldsymbol{49.90}$ & $\boldsymbol{50.82}$ & $\boldsymbol{47.86}$ &
$\boldsymbol{96.51}$ & $\boldsymbol{93.28}$ & $\boldsymbol{98.57}$ &
$\boldsymbol{68.78}$ & $\boldsymbol{68.83}$ & $\boldsymbol{67.57}$ &
$\boldsymbol{81.69}$ & $\boldsymbol{79.79}$ & $\boldsymbol{79.43}$ \\
\bottomrule
\end{tabular}}
\label{tab:combined_eval}
\end{table*}

\paragraph{Datasets.}
% We evaluate \method{} on six widely used datasets spanning different levels of domain proximity to ImageNet pre-training.
% Following prior work~\cite{lorm}, we group them into three categories.
% \textit{In-domain datasets} include CIFAR-100~\cite{cifar100}, ImageNet-R~\cite{imagenetr}, and ImageNet-A~\cite{imageneta}, which remain close to the natural image distribution of ImageNet.
% \textit{Fine-grained datasets} consist of Cars-196~\cite{krause2013cars} and CUB-200~\cite{cub200}, focusing on category-level specialization within a narrow domain.
% Finally, the \textit{out-of-domain dataset}, EuroSAT~\cite{helber2019eurosat}, contains satellite imagery that differs substantially from the pre-training distribution.
% All datasets are split into class-incremental tasks (10 for most datasets, 5 for EuroSAT).
% For ImageNet-A and Cars-196, we increase communication rounds to 10 for convergence stability, following prior work~\cite{lorm,hgp}.
% Additional preprocessing and task construction details are provided in Appendix Sec.~\ref{supsec:dataset_details}.

We evaluate \method{} on six datasets spanning varying proximity to ImageNet. 
Following prior work~\cite{lorm}, we group them into three categories: 
in-domain (CIFAR-100~\cite{cifar100}, ImageNet-R~\cite{imagenetr}, ImageNet-A~\cite{imageneta}), 
fine-grained (Cars-196~\cite{krause2013cars}, CUB-200~\cite{cub200}), 
and out-of-domain (EuroSAT~\cite{helber2019eurosat}). 
All datasets are split into class-incremental tasks (10 for most, 5 for EuroSAT), 
and ImageNet-A and Cars-196 use 10 FL rounds for stable convergence~\cite{lorm,hgp}. 
% Additional task details are in Appendix Sec.~\ref{supsec:dataset_details}.

\paragraph{Baselines.}
We compare \method{} with representative methods from FL, CL, model merging, and FCIL.
From FL, we include CCVR~\cite{ccvr} and FedProto~\cite{fedproto}, extended to continual learning using ACE~\cite{caccia2021new}. 
From CL, we evaluate EWC~\cite{ewc}, LwF~\cite{lwf}, L2P~\cite{l2p}, and CODA-Prompt~\cite{codaprompt}, adapted to the federated setting via FedAvg~\cite{fedavg}. 
From model merging, we include FisherAvg~\cite{matena2022merging} and RegMean~\cite{jin2022dataless}. 
Finally, we compare with FCIL-specific methods TARGET~\cite{target}, PILoRA~\cite{pilora}, FOT~\cite{fot}, and LoRM~\cite{lorm}. 
For fairness, we exclude approaches requiring surrogate or centralized server data~\cite{fedta,ma2022continual,surrdata1}.
\paragraph{Results.}
As shown in Tab.~\ref{tab:combined_eval}, \method{} achieves the highest FAA across all task settings, outperforming all baselines by a large margin.
The improvement is consistent across all three dataset categories (\ie in-domain, fine-grained, and out-of-domain), demonstrating the robustness of spatio-temporal refinement of \method{}.
Notably, on several datasets such as CIFAR-100, ImageNet-R, and EuroSAT, \method{} even surpasses \textit{Joint}. Sec.~\ref{subsec:analysis} analyzes this behavior through optimization sharpness and representation separation.

\subsection{Language Domain}
\label{subsec:language_domain}

\paragraph{Datasets.}
To verify that \method{} generalizes beyond vision, we further evaluate it on two text classification datasets: 20-Newsgroups~\cite{20nng} for in-domain performance and CLINC-150 (Out-of-Scope)~\cite{clinc150} for robustness under domain shift.
% Additional details are provided in Appendix Sec.~\ref{supsec:dataset_details}.

% \input{tables/text_2datasets}
% \input{tables/architecture}

\begin{table*}[t]
\centering

\begin{minipage}[b]{0.475\textwidth}
\renewcommand{\arraystretch}{0.7}
\footnotesize
\centering
\caption{\textbf{Evaluation on language-domain datasets in FAA ($\uparrow$).}}
\setlength{\tabcolsep}{0.2em}
\adjustbox{max width=\textwidth}{
\begin{tabular}{@{}l|ccc|ccc@{}}
\toprule
& \multicolumn{3}{c|}{\textbf{20-NewsGroups}} & \multicolumn{3}{c}{\textbf{CLINC-150}} \\
\cmidrule(lr){2-4}\cmidrule(lr){5-7}
\textbf{Joint} & \multicolumn{3}{c|}{$\boldsymbol{93.37}$} & \multicolumn{3}{c}{$\boldsymbol{95.53}$} \\
\midrule
\textbf{Distrib.} $\boldsymbol{\beta}$ & $1.0$ & $0.5$ & $0.2$ & $1.0$ & $0.5$ & $0.2$ \\
\midrule
RegMean     & $52.38$  & $50.96$ & $58.31$ & $74.16$ & $74.80$ & $70.78$ \\
CCVR        & $72.84$ & ${77.48}$ & $69.72$ & $39.18$ & $38.13$ & $30.36$ \\
CODA-P      & $68.71$ & $57.66$ & $56.35$ & $69.02$ & $59.40$ & $38.67$ \\
LoRM        & $\underline{88.58}$ & $\underline{84.73}$ & $\underline{88.10}$ & $\underline{84.78}$ & $\underline{82.58}$ & $\underline{77.67}$ \\
\midrule
\rowcolor{gray!10} 
\textbf{\method{}} & $\boldsymbol{89.42}$ & $\boldsymbol{85.53}$ & $\boldsymbol{89.52}$ 
& $\boldsymbol{96.24}$ & $\boldsymbol{95.93}$ & $\boldsymbol{94.11}$ \\
\bottomrule
\end{tabular}}
\label{tab:language_results}
\end{minipage}
\hfill
\begin{minipage}[b]{0.49\textwidth}
\renewcommand{\arraystretch}{1.0}
\footnotesize
\centering
\caption{\textbf{Backbone-wise robustness evaluation of \method{}.}}
\setlength{\tabcolsep}{0.8em}
\adjustbox{max width=\textwidth}{
\begin{tabular}{l l c c c}
\toprule
\multirow{2}{*}{\textbf{Backbone}} & \multirow{2}{*}{\textbf{Method}} &
\multicolumn{3}{c}{\textbf{Distrib. $\beta$}} \\
\cmidrule(lr){3-5}
 & & 0.5 & 0.1 & 0.05 \\
\midrule
\multirow{2}{*}{\textbf{LoRA}}
 & Joint & \multicolumn{3}{c}{92.93} \\
 & \g{\textbf{\method{}}} 
   & \g{\textbf{96.40}} 
   & \g{\textbf{95.50}} 
   & \g{\textbf{96.35}} \\
\midrule
\multirow{2}{*}{\textbf{ConvNeXt}}
 & Joint & \multicolumn{3}{c}{91.46} \\
 & \g{\textbf{\method{}}} 
   & \g{\textbf{96.19}} 
   & \g{\textbf{93.75}} 
   & \g{\textbf{93.93}} \\
\bottomrule
\end{tabular}}
\label{tab:architecture_robustness}
\end{minipage}

\end{table*}
\renewcommand{\arraystretch}{0.85}
\begin{table}[t]
\footnotesize
\centering
\caption{\textbf{Ablation study on the components of \method{} on CIFAR-100 ($\beta=0.05$)}}
\label{tab:ablation}
\setlength{\tabcolsep}{0.85em}

\adjustbox{max width=0.75\linewidth}{
\begin{tabular}{lccc}
\toprule
\textbf{} & \textbf{Ablation Setting} & \textbf{FAA ($\uparrow$)} & \textbf{Wall Time} \\
\midrule

\multirow{2}{*}{\textbf{Spatial \method{}}} 
& {\small w/o} Z-score Trimming & 85.87 {\scriptsize{\color{red}($\downarrow$10.45)}} & 1.00$\times$\\
& {\small w/o} Spatial Surgery   & 78.93 {\scriptsize{\color{red}($\downarrow$17.39)}} & 0.77$\times$\\
\midrule

\multirow{1}{*}{\textbf{Temporal \method{}}} 
& {\small w/o} Temporal Surgery  & 87.67 {\scriptsize{\color{red}($\downarrow$8.65)}} & 0.90$\times$\\
\midrule

\multirow{2}{*}{\textbf{Inference-ready Module}} 
& {\small w/o} Sparsification    & 90.23 {\scriptsize{\color{red}($\downarrow$6.09)}} & 0.92$\times$\\
& {\small w/o} Elect-Sign    & 94.58 {\scriptsize{\color{red}($\downarrow$1.74)}} & 0.93$\times$\\
& {\small w/o} Mask Modular      & 91.24 {\scriptsize{\color{red}($\downarrow$5.08)}} & 1.00$\times$\\
\midrule

\textbf{\method{}} & -- & 96.32 & 1.00$\times$\\
\bottomrule
\end{tabular}
}
\end{table}

\paragraph{Baselines.}
We adopt four representative baselines covering the paradigms of FL, CL, model merging, and FCIL: CCVR, CODA-Prompt, RegMean, and LoRM.

\paragraph{Results.}
Tab.~\ref{tab:language_results} presents the results of \method{} on the language domain.
\method{} retains its advantage beyond vision, achieving the highest FAA across all distribution levels.
It consistently outperforms all baselines and, in many cases, even surpasses \textit{Joint} under severe distribution shifts.
These results show that \method{} generalizes seamlessly from vision to language, 
maintaining stable optimization and consistent representations across heterogeneous domains.

\subsection{Architectural Robustness}
\label{subsec:architecture_robustness}
We assess the architectural robustness of \method{} on CIFAR-100 under two representative settings:
(1) a parameter-efficient backbone (LoRA~\cite{hu2022lora}) commonly used in FCIL, and 
(2) a CNN backbone (ConvNeXt~\cite{convnext}) to test generality beyond ViTs.
As shown in Tab.~\ref{tab:architecture_robustness}, \method{} maintains strong performance across both architectures and varying levels of client heterogeneity.
Notably, the LoRA-based \method{} even surpasses the full-finetuning (Tab.~\ref{tab:combined_eval}), consistent with findings that parameter-efficient modules preserve pretrained representations better than full fine-tuning while reducing communication cost~\cite{pilora, lorm}.
Overall, these results indicate that \method{} is architecture-agnostic and applicable across diverse models.

\subsection{Ablation Study}
\label{subsec:ablation}

% To examine the core mechanisms of \method{}, we perform ablation studies, analyzing the contribution of each component and the effect of surgery directions.

% \paragraph{\method{} Components.}
% In Tab.~\ref{tab:ablation}, we remove each component along the spatial and temporal axes to assess its contribution.
% Both spatial and temporal surgery (\ie Eq.~\ref{eq:general_projection} and Eq.~\ref{eq:temporal_projection}) constitute the core of the framework, as removing either leads to a significant performance drop, confirming their essential role in mitigating client- and task-level interference.
% Auxiliary components such as Z-score trimming, sparsification, and mask modularization further enhance stability and reconstruction fidelity, ensuring consistent performance under non-IID conditions and continual updates.
% Overall, while these supporting modules provide additional robustness, the spatial–temporal core of \method{} remains the primary driver of continual improvement.

\paragraph{\method{} Components.}
Tab.~\ref{tab:ablation} evaluates the contribution of each component of \method{} by removing them from the spatial and temporal axes as well as the inference-ready module.
The results show that spatial and temporal \method{} form the core of the framework, as removing either leads to a large performance drop. The inference module components are also non-negligible: removing sparsification, Elect-Sign, or task-specific activation consistently degrades FAA, indicating that compact residual selection and task-wise modulation are required to preserve task-specific behavior after temporal surgery.
Together, they enable the consistent gains achieved by \method{}.

\begin{figure}[t]
\centering

% -------- First figure --------
\raisebox{-1.22em}{\begin{minipage}[t]{0.38\linewidth}
  \centering
  \includegraphics[width=0.825\linewidth]{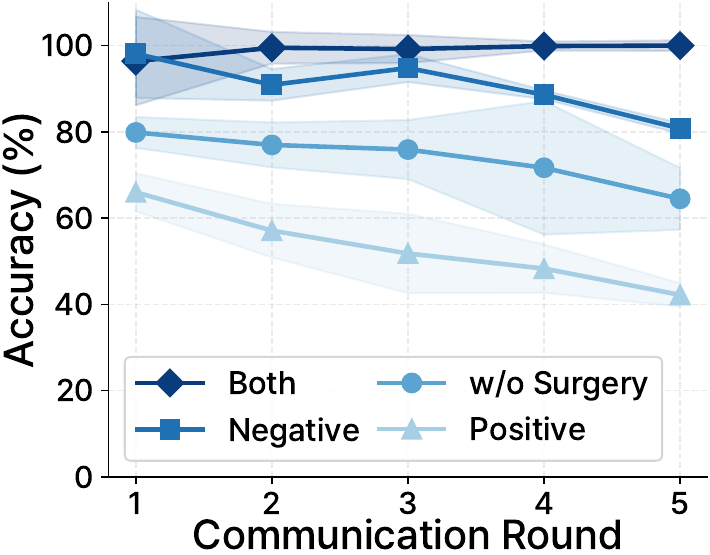}
  \captionof{figure}{\textbf{Ablation study on the target interaction for \method{}.}}
  \label{fig:surgery_ablation}
\end{minipage}}
\hfill
% -------- Second figure --------
\begin{minipage}[t]{0.60\linewidth}
  \centering
    \stepcounter{figure}
  \begin{subfigure}[t]{0.49\linewidth}
      \centering
      \includegraphics[width=\linewidth]{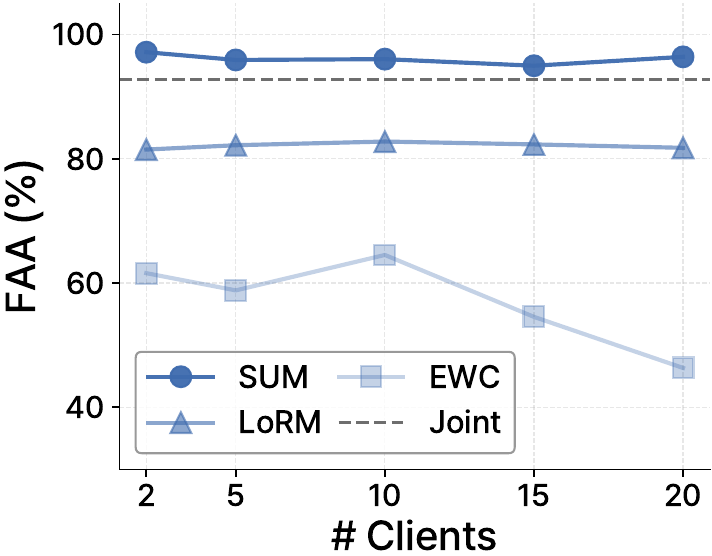}
      \caption{\textbf{Client scalability}}
      \label{fig:client_scale}
  \end{subfigure}
  \hfill
  \begin{subfigure}[t]{0.49\linewidth}
      \centering
      \includegraphics[width=\linewidth]{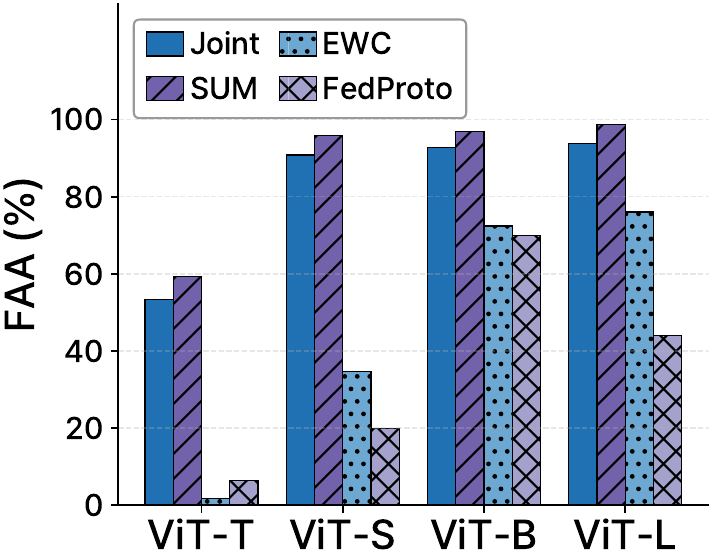}
      \caption{\textbf{Model scalability}}
      \label{fig:vit_scale}
  \end{subfigure}
    \addtocounter{figure}{-1}
  \captionof{figure}{\textbf{Scaling behavior of \method{}.}
  (a) Number of clients.
  (b) ViT backbone sizes.}
  \label{fig:scalability}

\end{minipage}

\end{figure}

\paragraph{Surgery Target.}
% By default, \method{} performs bidirectional surgery, handling both positive and negative cosine similarities of adaptation vectors to jointly remove redundant and conflicting components.
% A potential concern is that removing positively aligned components might discard beneficial shared information among clients or tasks.
% To verify this, we compare positive-only, negative-only, and bidirectional settings (\textit{both}).
% As shown in Fig.~\ref{fig:surgery_ablation}, the bidirectional variant consistently achieves the best performance, confirming that jointly resolving over-alignment and opposition between vectors yields the most stable aggregation.
\method{} handles both positive and negative cosine similarities of adaptation vectors to mitigate redundant or destructive updates.
A potential concern is that modifying positively aligned vectors may remove beneficial shared information.
To assess this, we compare three projection targets: positive-only, negative-only, and bidirectional (both) cosine interactions.
As shown in Fig.~\ref{fig:surgery_ablation} (CIFAR-100, $\beta=0.05$), the positive-only variant even underperforms the w/o-\method{} baseline, suggesting that naively altering positive directions disrupts the consistent shared representation. 
In contrast, \method{} operates on both positive and negative components while improving performance, preserving essential semantics and resolving both over-alignment and opposition.

\begin{figure}[t]
    \centering
    \includegraphics[width=0.65\linewidth]{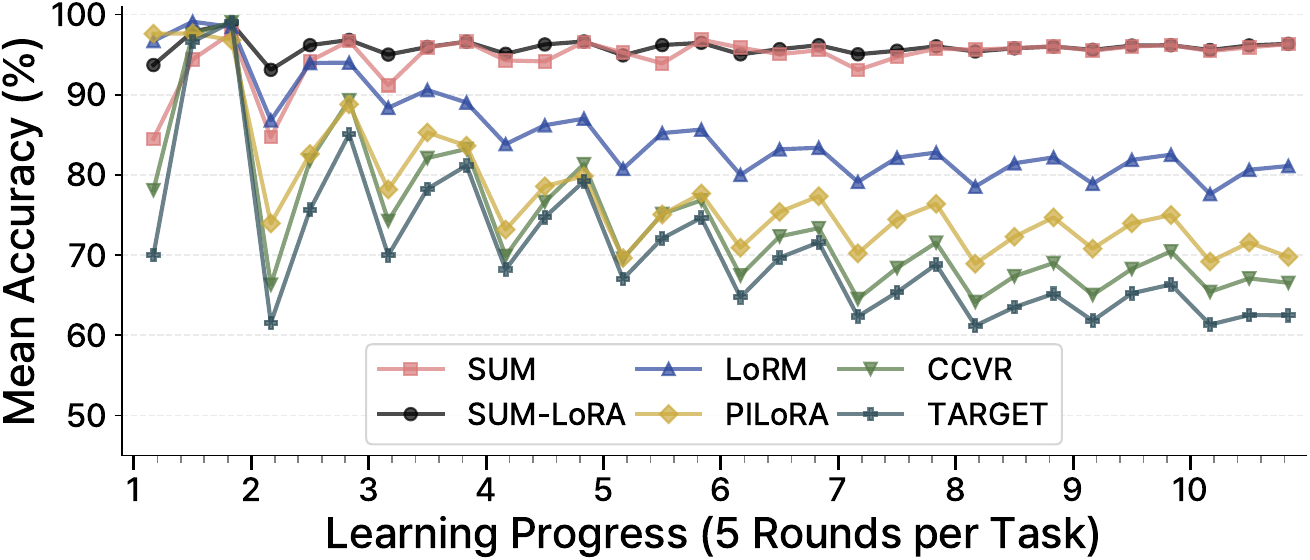}
\caption{\textbf{Learning curves of \method{}.}
A shallower V-shape and stable performance indicate fast adaptation and reduced forgetting.}
    \label{fig:learning_curves}
\end{figure}

\subsection{In-depth Analysis}
\label{subsec:analysis}

\paragraph{Scaling.}
We evaluate the client and model scalability of \method{} on CIFAR-100 ($\beta=0.05$) 
(Fig.~\ref{fig:client_scale} and Fig.~\ref{fig:vit_scale}). 
For client scaling, we compare against EWC~\cite{ewc}, a representative baseline sensitive to heterogeneity,
and LoRM~\cite{lorm}, a SOTA method. As the number of clients increases, EWC 
quickly degrades and LoRM plateaus below \textit{Joint}, while \method{} maintains and even 
surpasses \textit{Joint} performance. 
For model scaling, we include FedProto~\cite{fedproto}, the strongest full-finetuning approach (Tab.~\ref{tab:combined_eval}). 
As ViT size grows from Tiny to Large, FedProto and EWC show limited or negative gains, 
whereas \method{} scales smoothly and consistently outperforms \textit{Joint} baseline across all model sizes.

\paragraph{Learning Dynamics}
To analyze spatio-temporal learning dynamics, we compare baselines with \method{} in Fig.~\ref{fig:learning_curves} by sampling mean accuracies at the beginning, middle, and end of each task.
Across baselines, two consistent patterns emerge: (1) a \textit{V-shaped learning curve} during federated rounds and (2) a gradual performance decline as tasks progress.
The \textit{V-shape} reflects slower convergence in early rounds, while the decline indicates catastrophic forgetting.
In contrast, \method{} shows a much shallower V-shape and no noticeable drop across tasks, indicating faster adaptation and better retention.

\begin{figure}[t]
    \centering
    % ----- (a) Z-score threshold -----
    \begin{subfigure}[b]{0.325\linewidth}
        \centering
        \includegraphics[width=\linewidth]{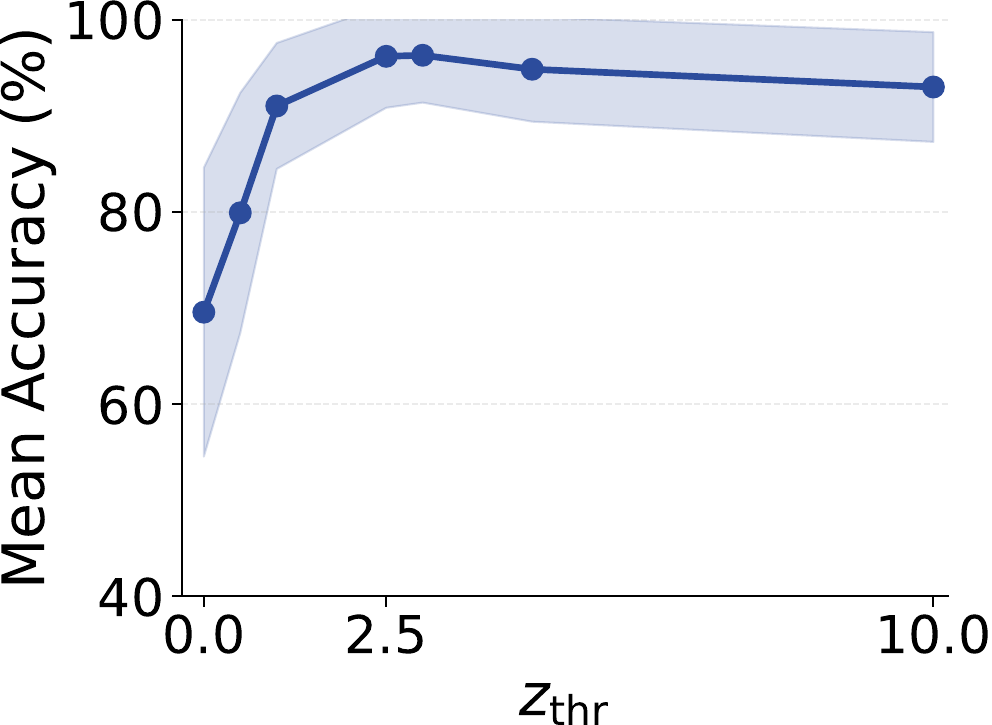}
        \caption{Z-score threshold }
        \label{fig:zthr}
    \end{subfigure}
    % ----- (b) Scaling coefficient -----
    \begin{subfigure}[b]{0.325\linewidth}
        \centering
        \includegraphics[width=\linewidth]{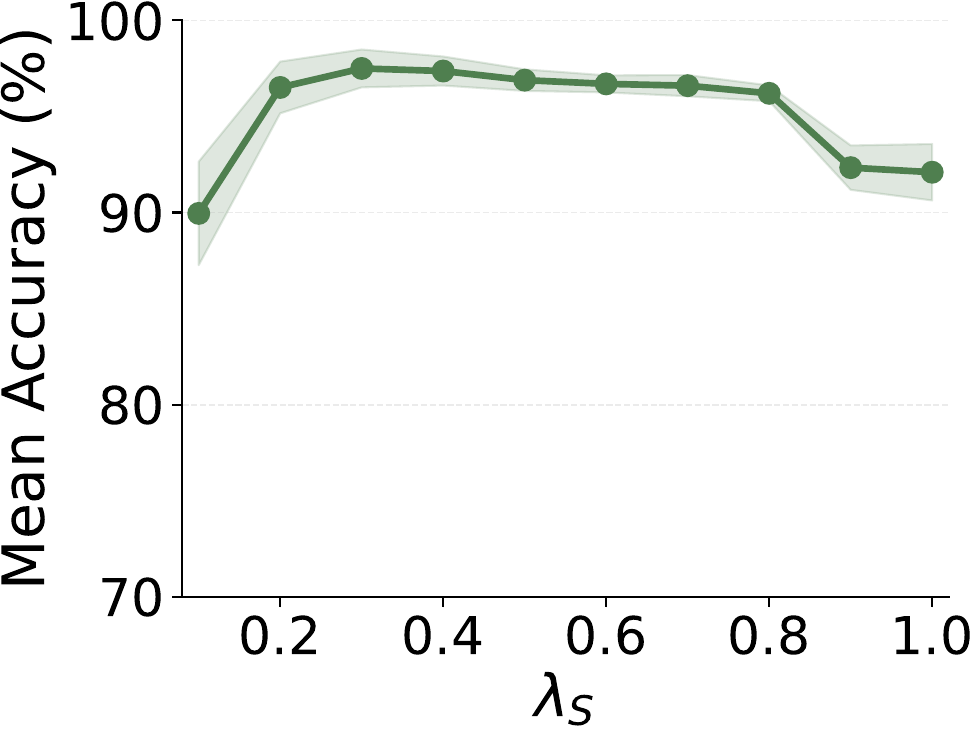}
        \caption{Scaling coefficient }
        \label{fig:lambda}
    \end{subfigure}
    % ----- (c) Top-k sparsification -----
    \begin{subfigure}[b]{0.325\linewidth}
        \centering
        \includegraphics[width=\linewidth]{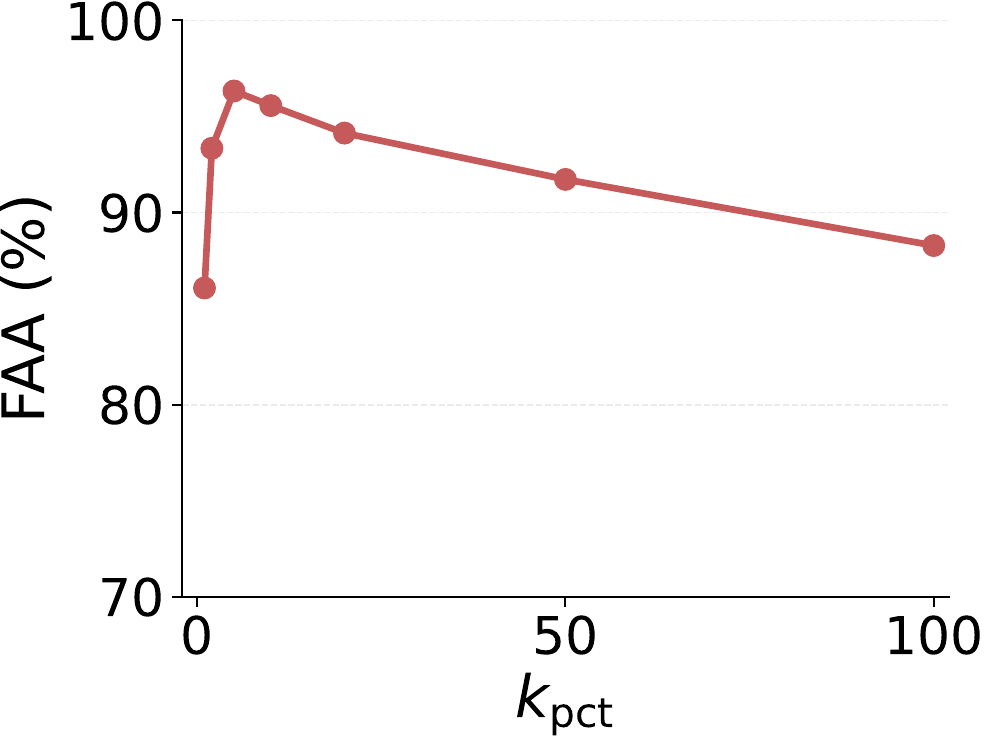}
        \caption{Top-$k$ sparsification }
        \label{fig:kpct}
    \end{subfigure}

    \caption{\textbf{Hyperparameter sensitivity analysis of \method{}.}}
    \label{fig:param_sensitivity}
\end{figure}

\paragraph{Hyperparameter Sensitivity}
We evaluate the robustness of \method{} to three hyperparameters: the Z-score threshold $z_{\text{thr}}$, 
the scaling coefficient $\lambda_S$, and the sparsification ratio $k_{\text{pct}}$. 
As shown in Fig.~\ref{fig:param_sensitivity}, \method{} remains stable across broad ranges, confirming its insensitivity to hyperparameter settings.
Accuracy peaks near $z_{\text{thr}}=3.0$, where large outliers are trimmed without removing informative parameters.
$\lambda_S$ performs consistently well between 0.2 and 0.8, with 0.4 as a balanced default.
Finally, moderate sparsification (up to 5\%) improves accuracy by suppressing residual interference and highlighting essential parameters, emphasizing the value of compact, interference-free representations in FCIL.

\paragraph{vs. Joint Training}
\textit{Joint} does not explicitly regulate directional interactions among gradients during centralized optimization. 
Prior work in MTL and CL shows that such interactions can lead to sharper minima and less separated representations~\cite{liu2021conflict,worsham2020multi,phan2025beyond}, implying that \textit{Joint} is not necessarily an oracle.
In contrast, \method{} explicitly removes redundant and destructive components from adaptation vectors before aggregation.
To examine this behavior, we analyze the local loss landscape and representation geometry on CIFAR-100 ($\beta=0.05$) test set, measuring sharpness as the loss increase under $\ell_2$ weight perturbations ($\rho=0.2$) and class separation via inter-class cosine distances between penultimate-layer representations.
As shown in Tab.~\ref{tab:sharpness_margin}, \method{} converges to flatter minima (sharpness $7.7\times10^{-4} \pm 1.2\times10^{-4}$ vs.\ $3.18\times10^{-3} \pm 4.5\times10^{-4}$) and larger margins ($0.482 \pm 0.021$ vs.\ $0.362 \pm 0.018$) than \textit{Joint}.
These results suggest that resolving directional conflicts before aggregation leads to more stable optimization and improved class separation, explaining why \method{} can outperform \textit{Joint}.
%better-separated representations.

\subsection{Efficiency and Practical Considerations}
\label{subsec:overhead}

\begin{table*}[t]
\centering
\caption{\textbf{Computational overhead comparison across FCIL methods.}
$D$ denotes the model parameter dimension, $T$ the number of tasks,
$r$ the LoRA rank, $D_{\text{attn}}$ the attention parameter dimension,
$P$ the prototype size, $G$ the Gram statistics size,
$B$ the orthogonal basis size, and $S$ the synthetic data size.}
\label{tab:overhead}
\setlength{\tabcolsep}{6pt}
\renewcommand{\arraystretch}{1.15}
\adjustbox{max width=0.95\linewidth}{
\begin{tabular}{lccccc}
\toprule
\textbf{Method} & \textbf{Upload} & \textbf{Download} & \textbf{Server Memory} & \textbf{Round Time (s)} & \textbf{Task Final. Time (s)} \\
\midrule
FedAvg  & $\mathcal{O}(D)$ & $\mathcal{O}(D)$ & $\mathcal{O}(D)$ & 239.4 & 9.1 \\
PiLoRA  & $\mathcal{O}(TrD_{\text{attn}} + P)$ & $\mathcal{O}(TrD_{\text{attn}} + TP)$ & $\mathcal{O}(D + TrD_{\text{attn}} + TP)$ & 269.3 & 0.6 \\
TARGET  & $\mathcal{O}(D)$ & $\mathcal{O}(D)$ & $\mathcal{O}(D + TS)$& 272.6 & 771.3 \\
FOT     & $\mathcal{O}(D)$ & $\mathcal{O}(D)$ & $\mathcal{O}(D + B)$ & 392.8 & 21.1 \\
LoRM    & $\mathcal{O}(D + G)$ & $\mathcal{O}(D + G)$ & $\mathcal{O}(D + G)$ & 689.7 & 24.5 \\
\midrule
\method{} & $\mathcal{O}(D)$ & $\mathcal{O}(D)$ & $\mathcal{O}(TD)$ & 324.2 & 37.7 \\
\bottomrule
\end{tabular}}
\end{table*}

\begin{figure}[t]
    \centering

    \begin{minipage}[t]{0.55\linewidth}
        \vspace{0pt}
        \centering
        \input{fig_tex/task_scaling}
        \vspace{-1mm}
        \caption{\footnotesize
        Memory footprint and mitigation for server-side task scalability of \method{}.
        }
        \label{fig:task_scaling}
    \end{minipage}
    \hfill
    \begin{minipage}[t]{0.42\linewidth}
        \vspace{0pt}
        \centering
        \captionof{table}{\footnotesize
        Robustness to unreliable clients on CIFAR-100 ($\beta=0.05$).
        }
        \vspace{3mm}
        {\footnotesize
        {\footnotesize
\setlength{\tabcolsep}{2.0pt}
\renewcommand{\arraystretch}{1.0}

\adjustbox{max width=0.9\linewidth}{%
\begin{tabular}{lcc}
\toprule
\textbf{Method} & \textbf{Clean} & \textbf{Avg. FAA} $\uparrow$ \\
\midrule
LoRM & 82.8 & 67.7 $\pm$ 8.7 (\textcolor{red}{$\downarrow$18.2\%}) \\
\textbf{\method{}} & \textbf{96.3} & \textbf{80.5 $\pm$ 9.6} (\textcolor{red}{$\downarrow$16.4\%}) \\
\bottomrule
\end{tabular}%
}
}
        }
        % \vspace{-1mm}
        % \captionof{table}{\footnotesize
        % Robustness to unreliable clients on CIFAR-100 ($\beta=0.05$).
        % }
        \label{tab:unreliable_clients}
    \end{minipage}

    \vspace{-2mm}
\end{figure}

We analyze the computational overhead of \method{} compared to representative FCIL baselines.
Tab.~\ref{tab:overhead} summarizes communication complexity, server memory requirements, and measured runtime on a single H200 GPU, including round time and task finalization time.
Although \method{} operates on adaptation vectors, it introduces only lightweight server-side computation while keeping communication costs identical to standard FL updates.

\paragraph{Memory Footprint and Mitigation.}
In the worst case, maintaining past refined task vectors on the server scales as $O(TD)$, where $T$ is the number of tasks and $D$ is the model parameter dimension.
Since this cost is entirely server-side, it does not introduce additional client memory, computation, or communication, but it can become non-negligible as the number of tasks grows.
We therefore evaluate simple task-vector compression strategies: low-precision storage using FP16 or INT8, and top-$k$ sparsification that keeps only the largest coordinates of each task vector.
As shown in Fig.~\ref{fig:task_scaling}, these compression schemes substantially reduce temporal storage up to 50 tasks while preserving the qualitative FAA trend.
This suggests that the $O(TD)$ storage cost is manageable in practice through lightweight server-side compression.

\paragraph{Unreliable Client Robustness.}
To evaluate robustness under unreliable participants, we simulate malicious clients by corrupting local training labels.
Specifically, 1--4 out of 10 clients are trained with shifted ground-truth labels to produce misleading updates.
As shown in Tab.~\ref{tab:unreliable_clients}, \method{} maintains substantially higher accuracy and exhibits smaller performance degradation than LoRM~\cite{lorm}.
This behavior arises because \method{} refines adaptation vectors at aggregation time, suppressing incompatible directions introduced by corrupted client updates.
\section{Conclusion}
\label{sec:conclusion}

We introduced \methodlong{} (\method{}), a unified framework for Federated Class Incremental Learning (FCIL) that models federated and continual updates as spatio-temporal interactions among adaptation vectors. 
By performing projection-based surgery at aggregation time, \method{} removes redundant and conflicting update components, enabling stable integration of heterogeneous client and task updates.
Extensive experiments show that \method{} consistently improves performance across FCIL benchmarks while maintaining lightweight communication and computation overhead. 
Importantly, this improvement is achieved without modifying client-side training, adding replay buffers, or requiring additional client communication, making \method{} compatible with standard federated deployment protocols.
These results highlight the importance of modeling directional interactions among adaptation vectors for resolving spatio-temporal interference in FCIL.

% ---- Bibliography ----
%
% BibTeX users should specify bibliography style 'splncs04'.
% References will then be sorted and formatted in the correct style.
%
\bibliographystyle{splncs04}
\bibliography{main}

\clearpage
\appendix
\setcounter{page}{1}

\section*{Appendix}
\addcontentsline{toc}{section}{Appendix}

In this supplementary material, we provide additional theoretical analyses, implementation details, and extended experimental results that support the main findings of the paper. Specifically:

\begin{itemize}
    \item     \textbf{Section~\ref{sup:discussion}} discusses the limitations of the proposed \method{} and potential strategies for mitigating them. 
    \item
\textbf{Section~\ref{sup:proof}} presents the formal proofs of the main theorems and propositions introduced in the main text.
    \item \textbf{Section~\ref{sup:implementation_details}} describes the task setup, hyperparameters, and other implementation details necessary for reproduction.
    \item \textbf{Section~\ref{sup:additional_experiments}} provides additional ablation studies, extended quantitative evaluations.
\end{itemize}
\section{Limitations and Mitigation Strategies}
\label{sup:discussion}

\subsection{Server-Side Storage Overhead.}
Temporal \method{} stores refined task adaptation vectors across tasks, leading to a worst-case server storage requirement of $\mathcal{O}(TD)$, where $T$ denotes the number of tasks and $D$ the number of model parameters. 
This storage corresponds to persistent model parameters maintained on the server rather than runtime memory, and therefore does not affect client computation or communication. 
However, it may become costly when the number of tasks grows very large.

\paragraph{Mitigation.}
As described in Sec.~\ref{subsec:overhead}, the storage cost can be reduced by compressing stored task adaptation vectors. 
For example, each vector may retain only the top-$k$ coordinates with the largest magnitudes or be represented using a low-rank factorization.
These compressed representations significantly reduce storage while preserving the dominant task adaptation directions.

\subsection{Aggregation Overhead.}
As the number of adaptation vectors increases, the refinement step in both spatial and temporal \method{} requires projections against a growing set of vectors. 
In the spatial case, these vectors correspond to client updates, while in the temporal case they correspond to stored task adaptation vectors.
This may increase the computational cost of the server-side aggregation step when the number of clients or tasks becomes very large.

\paragraph{Mitigation.}
In practice, the projection step can be restricted to a limited subset of adaptation vectors rather than the entire set.
For example, projections may be performed only against a subset of stored vectors, such as a small set of representative directions.
Such strategies keep the number of projection operations bounded while preserving most of the benefits of interference-aware refinement.

\subsection{Inference-Time Task Selection}
The inference-ready module assumes that the relevant task adaptation vectors can be selected at inference time.
In practice, this requires identifying which task (or subset of tasks) is most relevant for a given input.
This assumption is common in MTL and modular model compositions~\cite{huang2024emr,yadav2024ties,yang2023adamerging}, where task-specific components are activated depending on the evaluation setting.

\paragraph{Mitigation.}
In practice, task selection can be implemented using a lightweight retrieval mechanism.
For example, the server may maintain a small key–value pool~\cite{l2p} over adaptation vectors and retrieve the most relevant vectors given the input query.
Such a module can be implemented with minimal overhead and does not affect the core aggregation procedure.

\section{Theoretical Proofs and Extensions}
\label{sup:proof}
Throughout this section, we assume that each local objective $\mathcal{L}_i$ is differentiable with $L_i$-Lipschitz continuous gradients, that the local iterates $\{\theta_i^{t,s}\}_{s=0}^{K_i}$ remain in a bounded neighborhood of $\theta_G^t$ within each communication round $t$, and that the effective step size $\eta_{\mathrm{eff},i} = \sum_{s=1}^{K_i} \eta_i^s$ is sufficiently small so that $\mathcal{O}(\eta_{\mathrm{eff},i}^{\,2})$ terms are negligible under the usual first-order approximation.

\subsection{FCIL as Multi-Task Learning.} 
\label{sup:fcil_mtl}

Federated Learning (FL) optimizes a shared global model by aggregating local objectives across participating clients.  
Let $\mathcal{S}$ denote the set of clients and $\mathcal{D}_i$ the local dataset of client $i$.  
The objective is:
\begin{equation}
\min_{\theta} \mathcal{L}_{\text{Spatial}}(\theta)
= \min_{\theta} \sum_{i \in \mathcal{S}} \mathcal{L}_i(\mathcal{D}_i; \theta),
\end{equation}
where each client can be interpreted as a distinct objective under shared parameters, forming a decentralized instance of \textit{spatial MTL}.  
Continual Learning (CL) similarly optimizes a shared model across a sequence of tasks with datasets $\{\mathcal{D}_k\}_{k=1}^T$:
\begin{equation}
\min_{\theta} \mathcal{L}_{\text{Temporal}}(\theta)
= \min_{\theta} \sum_{k=1}^{T} \mathcal{L}_k(\mathcal{D}_k; \theta),
\end{equation}
defining a \textit{temporal MTL} problem in which objectives evolve over time under the same model.  
Taken together, FCIL can be viewed as a two-axis extension of MTL, where spatial and temporal objectives jointly interact within a shared parameter space.  
This unified formulation provides a natural foundation for analyzing and mitigating interference across both dimensions.  
Empirically, we also observe that the parameter updates induced by FL and CL exhibit near-orthogonal growth patterns, consistent with well-known behaviors in MTL~\cite{ilharco2022editing} (Appendix~\ref{supsec:task_relation_analysis}).

\subsection{Adaptation Vector as Gradients}
\label{sup:proof_prop1}

\begin{proposition}
If each $\mathcal{L}_i$ has $L_i$-Lipschitz continuous gradients and local updates remain in a small neighborhood of $\theta_G^t$, then
\[
\tau_i^t \;\approx\; -\,\eta_i^{\mathrm{eff}}\,\nabla \mathcal{L}_i(\theta_G^t),
\qquad \text{where }\;\eta_i^{\mathrm{eff}}=\sum_{s=1}^{K_i}\eta_i^s,
\]
with an approximation error scaling with $(\eta_i^{\mathrm{eff}})^2$ up to constants depending on $L_i$.
\label{prop:client_vector_gradient}
\end{proposition}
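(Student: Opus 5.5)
The local SGD/GD iterates are $\theta_i^{t,s} = \theta_i^{t,s-1} - \eta_i^s \nabla \mathcal{L}_i(\theta_i^{t,s-1})$ for $s=1,\dots,K_i$, starting from $\theta_i^{t,0}=\theta_G^t$. With stochastic gradients the same argument goes through in expectation. Telescoping gives the exact identity
\[
\tau_i^t = \theta_i^{t,K_i}-\theta_G^t = -\sum_{s=1}^{K_i}\eta_i^s\,\nabla\mathcal{L}_i(\theta_i^{t,s-1}).
\]
We add and subtract $\nabla\mathcal{L}_i(\theta_G^t)$ inside the sum. This yields $\tau_i^t = -\eta_i^{\mathrm{eff}}\nabla\mathcal{L}_i(\theta_G^t) + E_i$, where
\[
E_i=-\sum_{s=1}^{K_i}\eta_i^s\bigl(\nabla\mathcal{L}_i(\theta_i^{t,s-1})-\nabla\mathcal{L}_i(\theta_G^t)\bigr).
\]
The proposition reduces to showing $\|E_i\| = \mathcal{O}\bigl((\eta_i^{\mathrm{eff}})^2\bigr)$ with constants depending on $L_i$.

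To bound $E_i$, we use the $L_i$-Lipschitz gradient assumption, which gives $\|E_i\|\le L_i\sum_{s}\eta_i^s\|\theta_i^{t,s-1}-\theta_G^t\|$. We then control the drift $\|\theta_i^{t,s-1}-\theta_G^t\|$. Because the iterates stay in a bounded neighborhood $\mathcal{N}$ of $\theta_G^t$ and $\nabla\mathcal{L}_i$ is continuous, the gradients are uniformly bounded there, say $\|\nabla\mathcal{L}_i(\theta)\|\le G_i$ on $\mathcal{N}$. One can take $G_i = \|\nabla\mathcal{L}_i(\theta_G^t)\| + L_i\,\mathrm{diam}(\mathcal{N})$. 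Telescoping again gives $\|\theta_i^{t,s-1}-\theta_G^t\|\le G_i\sum_{r<s}\eta_i^r\le G_i\,\eta_i^{\mathrm{eff}}$. Substituting,
\[
\|E_i\|\le L_i G_i\,\eta_i^{\mathrm{eff}}\sum_s\eta_i^s = L_iG_i\,(\eta_i^{\mathrm{eff}})^2 .
\]
A sharper version uses $\sum_s\eta_i^s\sum_{r<s}\eta_i^r\le\tfrac12(\eta_i^{\mathrm{eff}})^2$. This is the claimed second-order error.

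The main obstacle is making the bounded-gradient constant $G_i$ legitimate without circularity: the neighborhood must be one the iterates provably stay in. I would handle this by taking the assumption stated at the start of this section, that the iterates remain in a bounded neighborhood of $\theta_G^t$, at face value. Alternatively, a discrete Grönwall argument gives $\|\theta_i^{t,s}-\theta_G^t\|\le \eta_i^{\mathrm{eff}}\|\nabla\mathcal{L}_i(\theta_G^t)\|\,e^{L_i\eta_i^{\mathrm{eff}}}$. This yields $G_i$ in terms of $\|\nabla\mathcal{L}_i(\theta_G^t)\|$ alone, with $e^{L_i\eta_i^{\mathrm{eff}}}=\mathcal{O}(1)$ for small $\eta_i^{\mathrm{eff}}$. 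I would present the Grönwall version, since it removes the extra assumption and makes the dependence of the constant on $L_i$ explicit.
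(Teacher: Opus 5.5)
Your proposal is correct and follows the paper's proof essentially step for step. You telescope the local GD trajectory, add and subtract $\nabla\mathcal{L}_i(\theta_G^t)$, bound the error through the $L_i$-Lipschitz gradient and a uniform gradient bound $G_i$ on the neighborhood, and use $\sum_s\eta_i^s\sum_{r<s}\eta_i^r\le\tfrac12(\eta_i^{\mathrm{eff}})^2$ to reach $\|E_i\|\le\tfrac12 L_iG_i(\eta_i^{\mathrm{eff}})^2$, exactly as the paper does.

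Your discrete Gr\"{o}nwall variant gives drift at most $\|\nabla\mathcal{L}_i(\theta_G^t)\|\,e^{L_i\eta_i^{\mathrm{eff}}}\sum_{r<s}\eta_i^r$. It is a valid small strengthening the paper does not make, since it replaces the assumed bounded neighborhood with an explicit constant depending only on $L_i$ and the gradient at $\theta_G^t$.
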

\textit{Proof.}
Fix a round $t$ and client $i$, and denote the global model by $\theta_G^t$.
We write the local gradient descent trajectory as
\begin{align}
  \theta_{i}^{t,0} &= \theta_G^t, \\
  \theta_{i}^{t,s} &= \theta_{i}^{t,s-1} 
  - \eta_i^s \,\nabla \mathcal{L}_i\bigl(\theta_{i}^{t,s-1}\bigr),
  \quad s = 1,\dots,K_i,
\end{align}
where $\eta_i^s$ is the local step size at step $s$. The resulting client adaptation vector is
\begin{align}
  \tau_i^t 
  := \theta_{i}^{t,K_i} - \theta_G^t
  = - \sum_{s=1}^{K_i} \eta_i^s \,
      \nabla \mathcal{L}_i\bigl(\theta_{i}^{t,s-1}\bigr).
\end{align}
Let $\eta_{\mathrm{eff},i} := \sum_{s=1}^{K_i} \eta_i^s$.
Adding and subtracting $\nabla \mathcal{L}_i(\theta_G^t)$ inside the sum gives
\begin{align}
  \tau_i^t
  &= - \sum_{s=1}^{K_i} \eta_i^s 
     \Bigl[
       \nabla \mathcal{L}_i(\theta_G^t)
       + \bigl(
          \nabla \mathcal{L}_i(\theta_{i}^{t,s-1})
          - \nabla \mathcal{L}_i(\theta_G^t)
         \bigr)
     \Bigr] \nonumber \\
  &= - \eta_{\mathrm{eff},i} \,
      \nabla \mathcal{L}_i(\theta_G^t)
     - \sum_{s=1}^{K_i} \eta_i^s \,
       \bigl(
         \nabla \mathcal{L}_i(\theta_{i}^{t,s-1})
         - \nabla \mathcal{L}_i(\theta_G^t)
       \bigr).
\end{align}
Define the approximation error
\begin{align}
  E_i^t 
  &:= \tau_i^t + \eta_{\mathrm{eff},i}\,\nabla \mathcal{L}_i(\theta_G^t) \nonumber\\
  &= - \sum_{s=1}^{K_i} \eta_i^s 
      \left(
        \nabla \mathcal{L}_i(\theta_{i}^{t,s-1})
        - \nabla \mathcal{L}_i(\theta_G^t)
      \right).
\end{align}
Since $\nabla \mathcal{L}_i$ is $L_i$-Lipschitz,
\[
\|\nabla \mathcal{L}_i(\theta) - \nabla \mathcal{L}_i(\theta')\|
 \le L_i \|\theta - \theta'\|,
\]
we obtain
\begin{align}
  \|E_i^t\|
  &\le \sum_{s=1}^{K_i} \eta_i^s \,
       \bigl\|
         \nabla \mathcal{L}_i(\theta_{i}^{t,s-1})
         - \nabla \mathcal{L}_i(\theta_G^t)
       \bigr\| \nonumber \\
  &\le L_i \sum_{s=1}^{K_i} \eta_i^s \,
       \bigl\|\theta_{i}^{t,s-1} - \theta_G^t\bigr\|.
  \label{eq:error-bound-1}
\end{align}
From the update rule,
\begin{align}
  \theta_{i}^{t,s-1} - \theta_G^t
  &= - \sum_{r=1}^{s-1} \eta_i^r \,
      \nabla \mathcal{L}_i\bigl(\theta_{i}^{t,r-1}\bigr),
\end{align}
and therefore by the triangle inequality
\begin{align}
  \bigl\|\theta_{i}^{t,s-1} - \theta_G^t\bigr\|
  \le \sum_{r=1}^{s-1} \eta_i^r \,
      \bigl\|\nabla \mathcal{L}_i(\theta_{i}^{t,r-1})\bigr\|.
\end{align}
By assumption, the local iterates stay in a small neighborhood of 
$\theta_G^t$, hence the gradient norm is bounded there.
That is, there exists $G_i > 0$ such that
\[
\|\nabla \mathcal{L}_i(\theta)\| \le G_i
\]
along the trajectory.
Thus
\begin{align}
  \bigl\|\theta_{i}^{t,s-1} - \theta_G^t\bigr\|
  \le G_i \sum_{r=1}^{s-1} \eta_i^r.
\end{align}
Substituting into~\eqref{eq:error-bound-1} yields
\begin{align}
  \|E_i^t\|
  &\le L_i G_i
      \sum_{s=1}^{K_i} \eta_i^s
      \sum_{r=1}^{s-1} \eta_i^r.
\end{align}
The double sum can be rewritten as
\begin{align}
  \sum_{s=1}^{K_i} \sum_{r=1}^{s-1} \eta_i^s \eta_i^r
  &= \frac{1}{2}
     \Biggl[
       \Bigl(\sum_{s=1}^{K_i} \eta_i^s\Bigr)^{\!2}
       - \sum_{s=1}^{K_i} (\eta_i^s)^2
     \Biggr]
  \le \frac{1}{2} \,\eta_{\mathrm{eff},i}^{\,2}.
\end{align}
Therefore
\begin{align}
  \|E_i^t\|
  \le \frac{1}{2} L_i G_i \,\eta_{\mathrm{eff},i}^{\,2}.
\end{align}
Hence we obtain
\begin{align}
  \tau_i^t
  = - \eta_{\mathrm{eff},i} \,\nabla \mathcal{L}_i(\theta_G^t)
    + \varepsilon_i^t,
  \quad
  \|\varepsilon_i^t\|
  \le C_i \,\eta_{\mathrm{eff},i}^{\,2},
\end{align}
for a constant $C_i := \tfrac{1}{2} L_i G_i$ depending only on $L_i$ and the local neighborhood of $\theta_G^t$.
Equivalently,
\[
\tau_i^t \approx - \eta_{\mathrm{eff},i} \nabla \mathcal{L}_i(\theta_G^t)
\]
with approximation error scaling as $\mathcal{O}(\eta_{\mathrm{eff},i}^{\,2})$. Task adaptation vectors are obtained from the aggregation of client updates across communication rounds, and therefore inherit the same first-order gradient interpretation established in Prop.~\ref{prop:client_vector_gradient}.
\hfill$\square$

\subsection{Extension of Proposition~\ref{prop:client_vector_gradient} to Multi-step SGD}
\label{sup:sgd_extension}

The result in Prop.~\ref{prop:client_vector_gradient} extends naturally
to the stochastic setting.  
Let the local update rule be
\begin{align}
  \theta_{i}^{t,s}
  &= \theta_{i}^{t,s-1}
     - \eta_i^s\, g_i^{t,s-1}, 
\end{align}
where $g_i^{t,s-1}$ is a stochastic gradient estimate satisfying
\[
\mathbb{E}[g_i^{t,s-1}\mid\theta_{i}^{t,s-1}]
 = \nabla \mathcal{L}_i(\theta_{i}^{t,s-1})
\]
and
\[
\mathbb{E}\|g_i^{t,s-1}-\nabla \mathcal{L}_i(\theta_{i}^{t,s-1})\|^2
 \le \sigma_i^2
\]
for some bounded variance $\sigma_i^2$.
The client adaptation vector becomes
\begin{align}
  \tau_i^t
  &= - \sum_{s=1}^{K_i} \eta_i^s\, g_i^{t,s-1}.
\end{align}
Adding and subtracting $\nabla \mathcal{L}_i(\theta_G^t)$ yields
\begin{align}
  \tau_i^t
  &= - \eta_{\mathrm{eff},i}\,\nabla \mathcal{L}_i(\theta_G^t)
     \nonumber\\[-0.4em]
  &\quad - \sum_{s=1}^{K_i} \eta_i^s
     \bigl(
       \nabla \mathcal{L}_i(\theta_{i}^{t,s-1})
       - \nabla \mathcal{L}_i(\theta_G^t)
     \bigr)
     \nonumber\\[-0.4em]
  &\quad - \sum_{s=1}^{K_i} \eta_i^s
     \bigl(
       g_i^{t,s-1}
       - \nabla \mathcal{L}_i(\theta_{i}^{t,s-1})
     \bigr),
\end{align}
where $\eta_{\mathrm{eff},i}=\sum_{s=1}^{K_i}\eta_i^s$.
The first two terms reproduce the deterministic GD decomposition.
Define
\begin{align}
  E_i^t
  &:= \tau_i^t + \eta_{\mathrm{eff},i}\,
       \nabla \mathcal{L}_i(\theta_G^t).
\end{align}
Thus $E_i^t$ contains both the Lipschitz drift term and an additional stochastic noise term.
Under the same assumptions as in Prop.~\ref{prop:client_vector_gradient},
the deterministic drift term continues to scale as
\[
\mathcal{O}(\eta_{\mathrm{eff},i}^{\,2}).
\]
For the stochastic term, standard variance accumulation yields
\begin{align}
  \mathbb{E}\Bigl\|
     \sum_{s=1}^{K_i}
        \eta_i^s
        \bigl(
          g_i^{t,s-1}
          - \nabla \mathcal{L}_i(\theta_{i}^{t,s-1})
        \bigr)
  \Bigr\|^2
  \le
  \sigma_i^2
  \sum_{s=1}^{K_i} (\eta_i^s)^2.
\end{align}
Since
\[
\sum_{s=1}^{K_i} (\eta_i^s)^2
 \le \eta_{\mathrm{eff},i}^{\,2},
\]
the stochastic deviation is bounded in expectation by
\[
\mathcal{O}(\eta_{\mathrm{eff},i})
\]
in norm (or $\mathcal{O}(\eta_{\mathrm{eff},i}^{2})$ in mean-square).
Therefore
\begin{align}
  \mathbb{E}[\tau_i^t]
  &= - \eta_{\mathrm{eff},i}
     \nabla \mathcal{L}_i(\theta_G^t)
     + \mathcal{O}(\eta_{\mathrm{eff},i}^{\,2}),
\end{align}
and the deviation satisfies
\begin{align}
  \tau_i^t
  &= - \eta_{\mathrm{eff},i}
     \nabla \mathcal{L}_i(\theta_G^t)
     + \varepsilon_i^t
     + \xi_i^t,
\end{align}
where
\[
\|\varepsilon_i^t\| = \mathcal{O}(\eta_{\mathrm{eff},i}^{\,2})
\]
is the deterministic Lipschitz drift term and
\[
\mathbb{E}\|\xi_i^t\|^2
 = \mathcal{O}(\eta_{\mathrm{eff},i}^{\,2})
\]
captures the accumulated stochastic noise.
In particular, the \emph{direction} of the client adaptation vector remains aligned with
\[
-\nabla \mathcal{L}_i(\theta_G^t)
\]
up to second-order error, and its expectation preserves the same first-order equivalence as in the deterministic case.
\hfill$\square$

\subsection{Theorem~\ref{thm:main}}
\label{app:proof_main}

\textit{Proof.}
We first recall the standard two-task MTL setting in Gradient Surgery (PCGrad)~\cite{yu2020gradient}.
Let $\mathcal{L}_1, \mathcal{L}_2$ be two task losses and
$\mathcal{L}(\theta) = \mathcal{L}_1(\theta) + \mathcal{L}_2(\theta)$.
At a parameter $\theta$, denote
\[
\mathbf g_1 := \nabla \mathcal L_1(\theta),\quad
\mathbf g_2 := \nabla \mathcal L_2(\theta),\quad
\mathbf g := \mathbf g_1 + \mathbf g_2 .
\]
The standard MTL update and the PCGrad update with step size $t$ are
\begin{align}
\theta^{\text{MT}}
&= \theta - t\,\mathbf g,\\
\theta^{\text{PCGrad}}
&= \theta - t\,\mathbf g^{\text{pc}},
\end{align}
where the symmetric two-task PCGrad update removes projection components
between $\mathbf g_1$ and $\mathbf g_2$:
\begin{align}
\mathbf g^{\text{pc}}
&=
\mathbf g
-
\frac{\mathbf g_1^\top \mathbf g_2}{\|\mathbf g_1\|_2^2}\,\mathbf g_1
-
\frac{\mathbf g_1^\top \mathbf g_2}{\|\mathbf g_2\|_2^2}\,\mathbf g_2 .
\end{align}
Under differentiability, $L$-Lipschitz continuity of $\nabla\mathcal L$,
and a lower bound on the multi-task curvature
$\mathcal H(\mathcal L; \theta,\theta^{\text{MT}})$,
Yu et al.~\cite{yu2020gradient} show that suitable constants
$\ell\le L$ and step size $t$ exist such that
\begin{align}
\mathcal L(\theta^{\text{PCGrad}})
\le
\mathcal L(\theta^{\text{MT}}).
\label{eq:pcgrad-guarantee}
\end{align}
We now interpret this result in the model-merging setting via task vectors.
Let $\theta_{\text{base}}$ be the common base model and
$\theta_1,\theta_2$ the fine-tuned models for the two tasks,
with task vectors
\begin{align}
\tau_1 := \theta_1 - \theta_{\text{base}},\qquad
\tau_2 := \theta_2 - \theta_{\text{base}}.
\end{align}
By Prop.~\ref{prop:client_vector_gradient} and its stochastic extension
(Appendix~\ref{sup:sgd_extension}),
there exist effective step sizes $t_1^*,t_2^*>0$ such that
\begin{align}
\tau_1 &\approx -t_1^*\,\mathbf g_1(\theta_{\text{base}}),\\
\tau_2 &\approx -t_2^*\,\mathbf g_2(\theta_{\text{base}}),
\end{align}
up to second-order error.
Consider first the standard model merging
(\eg FedAvg~\cite{fedavg}, Task Arithmetic~\cite{ilharco2022editing},
MagMax~\cite{magmax}):
\begin{align}
\theta_{\text{standard}}
=
\theta_{\text{base}}
+
\lambda_1\tau_1
+
\lambda_2\tau_2 .
\end{align}
Choosing a scalar $t^*>0$ and setting
$\lambda_j=t^*/t_j^*$ gives
\begin{align}
\theta_{\text{standard}}
&=
\theta_{\text{base}}
+
\frac{t^*}{t_1^*}\tau_1
+
\frac{t^*}{t_2^*}\tau_2 \\
&\approx
\theta_{\text{base}}
-
t^*
\bigl(
\mathbf g_1(\theta_{\text{base}})
+
\mathbf g_2(\theta_{\text{base}})
\bigr),
\label{eq:std-as-mt}
\end{align}
which corresponds to a single MTL gradient step.
Next we consider our surgery-based merging \method{},
which refines task vectors via projection:
\begin{align}
\hat{\tau}_1
&=
\tau_1
-
\frac{\tau_1^\top\tau_2}{\|\tau_2\|_2^2}\tau_2,\\
\hat{\tau}_2
&=
\tau_2
-
\frac{\tau_1^\top\tau_2}{\|\tau_1\|_2^2}\tau_1 .
\end{align}
The merged model becomes
\begin{align}
\theta_{\method{}}
=
\theta_{\text{base}}
+
\lambda_1\hat{\tau}_1
+
\lambda_2\hat{\tau}_2 .
\end{align}
Substituting the gradient expressions yields
\begin{align}
\theta_{\method{}}
\approx
\theta_{\text{base}}
-
t^*\mathbf g^{\text{pc}},
\label{eq:method-as-pc}
\end{align}
which matches a single PCGrad update at $\theta=\theta_{\text{base}}$.
Thus
\begin{align}
\theta_{\text{standard}}
&\approx \theta^{\text{MT}},\\
\theta_{\method{}}
&\approx \theta^{\text{PCGrad}}.
\end{align}
Applying the PCGrad guarantee in
Eq.~\eqref{eq:pcgrad-guarantee} therefore yields
\begin{align}
\mathcal L(\theta_{\method{}})
\le
\mathcal L(\theta_{\text{standard}}),
\end{align}
which proves the theorem.
\hfill$\square$

\subsection{Temporal Extension of Theorem~\ref{thm:main}}
\label{app:proof_main_temporal}

\textit{Proof.}
Let the sequential global updates be
\[
\Delta_k := \theta_G^{(k)} - \theta_G^{(k-1)},
\]
and define the accumulated task vectors
\[
\tau_k := \theta_G^{(k)} - \theta_{\text{pre}}
= \sum_{i=1}^{k} \Delta_i .
\]
Thus all $\tau_k$ share the same base model $\theta_{\text{pre}}$,
satisfying the common-base requirement of Theorem~\ref{thm:main}.
Online Surgery performs the following sequential refinement
\begin{align}
\hat{\tau}_1 &= \tau_1, \\
\hat{\tau}_k
&=
\tau_k
-
\sum_{j=1}^{k-1}
\frac{\tau_k^\top \hat{\tau}_j}{\|\hat{\tau}_j\|_2^2}\hat{\tau}_j .
\label{eq:online_surgery_appendix}
\end{align}
At each step $k$, we consider the two vectors
\[
\sum_{j<k}\hat{\tau}_j
\quad\text{and}\quad
\tau_k .
\]
These vectors share the same base $\theta_{\text{pre}}$, and therefore form
exactly the two-task setting required by Theorem~\ref{thm:main}.
Applying one step of surgery yields
\[
\mathcal{L}\!\left(
\theta_{\text{pre}}
+
\sum_{j<k}\hat{\tau}_j
+
\hat{\tau}_k
\right)
\le
\mathcal{L}\!\left(
\theta_{\text{pre}}
+
\sum_{j<k}\hat{\tau}_j
+
\tau_k
\right).
\]
The base case $k=1$ is trivial since $\hat{\tau}_1=\tau_1$.
Assuming the inequality holds up to step $k-1$,
substituting into the above inequality yields the same guarantee for step $k$.
Induction over $k=1,\dots,T$ therefore gives
\[
\mathcal{L}\!\left(
\theta_{\text{pre}}
+
\sum_{k=1}^{T}\hat{\tau}_k
\right)
\le
\mathcal{L}\!\left(
\theta_{\text{pre}}
+
\sum_{k=1}^{T}\tau_k
\right).
\]
Unlike the original PCGrad update, which removes projections only when
the pairwise inner product is negative, our surgery operator subtracts the
projection component for any nonzero directional coupling.
Accordingly, this result should be viewed as a local descent-bound
interpretation of the same projection principle under the stated smoothness
and common-base assumptions, rather than as an unconditional improvement
guarantee.
In particular, the conclusion remains conditional on the
signal--interference condition stated in Thm.~\ref{thm:main}.

\hfill$\square$

\subsection{Proposition~\ref{prop:temporal_equivalence}}
\label{app:proof_prop_temporal}

\textit{Proof.}
We show that applying the projection in Eq.~\ref{eq:general_projection}
to the accumulated task vectors
\[
\tau_k = \theta_G^{(k)} - \theta_{\text{pre}}
\]
is equivalent to applying it to the sequential increments
\[
\Delta_k = \theta_G^{(k)} - \theta_G^{(k-1)} .
\]
Because $\theta_G^{(0)} = \theta_{\text{pre}}$, we have
\[
\tau_k
=
\theta_G^{(k)}-\theta_{\text{pre}}
=
\sum_{i=1}^{k}
(\theta_G^{(i)}-\theta_G^{(i-1)})
=
\sum_{i=1}^{k}\Delta_i .
\]
Thus
\[
\tau_1=\Delta_1,
\qquad
\tau_2=\Delta_1+\Delta_2 .
\]
Consider the case $T=2$.
Applying the projection operator gives
\begin{align}
\hat{\tau}_2
&=
\tau_2-\operatorname{proj}(\tau_2,\tau_1) \\
&=
(\Delta_1+\Delta_2)-\operatorname{proj}(\Delta_1+\Delta_2,\Delta_1) \\
&=
(\Delta_1+\Delta_2)
-
\bigl(\Delta_1+\operatorname{proj}(\Delta_2,\Delta_1)\bigr) \\
&=
\Delta_2-\operatorname{proj}(\Delta_2,\Delta_1)
=
\hat{\Delta}_2 .
\end{align}
Hence $\hat{\tau}_2=\hat{\Delta}_2$.
For $k>2$, since $\tau_k=\sum_{i=1}^{k}\Delta_i$, the same algebraic
cancellation applies at each step.
Assuming $\hat{\tau}_j=\hat{\Delta}_j$ for all $j<k$,
the projection of $\tau_k$ onto the span of
$\{\hat{\tau}_1,\dots,\hat{\tau}_{k-1}\}$ reduces to the projection of
$\Delta_k$ onto the span of
$\{\hat{\Delta}_1,\dots,\hat{\Delta}_{k-1}\}$,
which yields $\hat{\tau}_k=\hat{\Delta}_k$.
Thus projection on accumulated vectors $\{\tau_k\}$ produces exactly the
same refined directions as projection on the sequential increments
$\{\Delta_k\}$.
\hfill$\square$
\section{Implementation Details}
\label{sup:implementation_details}

\subsection{Hyperparameters}
\label{supsec:hyperparameters}

\begin{table}[t]
\centering
\footnotesize
\caption{\textbf{List of hyperparameter acronyms used in the paper.}}
\renewcommand{\arraystretch}{1.0}
\setlength{\tabcolsep}{0.4em}
\adjustbox{max width=0.65\textwidth}{
\begin{tabular}{cc}
\toprule
\textbf{Acronym} & \textbf{Description} \\
\midrule
\textit{lr} & Learning rate \\
\textit{lr\textsubscript{pr}} & Prototype learning rate \\
$\lambda_{\text{KL}}$ & KD loss scaling factor \\
\textit{r} & Rank for low-rank decomposition \\
$\textit{g}_{ep}$ & Generator training/generation epochs \\
$\epsilon_{\text{fot}}$, $\epsilon_{\text{inc}}$ & FOT energy threshold and per-task epsilon increment \\
$\gamma$ & Gram-matrix off-diagonal decay \\
\midrule
$z_{\text{thr}}$ & Z-score trimming threshold (Spatial \method{}) \\
$\lambda_S$ & Client-vector scaling (Spatial \method{}) \\
$k_{\text{pct}}$ & Top-$k$ sparsification percent (Inference-ready Module) \\
\bottomrule
\end{tabular}}
\label{tab:acronyms}
\end{table}

% \begin{table}[h]
% \centering
% \footnotesize
% \caption{\textbf{Overview of hyperparameter tables for each dataset.}}
% \renewcommand{\arraystretch}{1.0}
% \setlength{\tabcolsep}{1.5em}
% \adjustbox{max width=0.5\textwidth}{
% \begin{tabular}{lll}
% \toprule
% \textbf{Domain} & \textbf{Dataset} & \textbf{Table Ref.} \\
% \midrule
% \multirow{6}{*}{Vision} 
%   & CIFAR-100    & Table~\ref{tab:cifar_hyperparam} \\
%   & ImageNet-R   & Table~\ref{tab:imagenetr_hyperparam} \\
%   & ImageNet-A   & Table~\ref{tab:imageneta_hyperparam} \\
%   & EuroSAT      & Table~\ref{tab:eurosat_hyperparam} \\
%   & Cars-196     & Table~\ref{tab:cars_hyperparam} \\
%   & CUB-200      & Table~\ref{tab:cub_hyperparam} \\
% \midrule
% \multirow{2}{*}{Language}
% & 20-Newsgroups  & Table~\ref{tab:20ng_hyperparam} \\
%   & CLINC-150      & Table~\ref{tab:clinc_hyperparam} \\
% \bottomrule
% \end{tabular}}
% \label{tab:total_hyperparameters}
% \end{table}
We provide detailed hyperparameter settings for all baseline methods and note that we also performed hyperparameter search for these baselines. For any method-specific hyperparameters not explicitly listed, we follow the configurations recommended in their original works. The acronyms are summarized in Tab.~\ref{tab:acronyms}. Our \method{} includes three hyperparameters in total: two for the spatial \method{} ($z_{\text{thr}}$ and $\lambda_S$) and one for the inference-ready module construction ($k_{\text{pct}}$). Please refer to Tabs.~\ref{tab:cifar_hyperparam}–\ref{tab:20ng_hyperparam} for the detailed hyperparameter settings used for each task.

\subsection{Datasets and Task Details}
\label{supsec:dataset_details}

In this section, we describe the datasets used to evaluate the baselines and \method{}, and outline the task configurations for the Federated Class Incremental Learning (FCIL) setting. \method{} is evaluated on six widely used image classification benchmarks—CIFAR-100~\cite{cifar100}, ImageNet-R~\cite{imagenetr}, ImageNet-A~\cite{imageneta}, EuroSAT~\cite{helber2019eurosat}, Cars-196~\cite{krause2013cars}, and CUB-200~\cite{cub200}—as well as two text classification tasks: 20-Newsgroups~\cite{20nng} and CLINC150~\cite{clinc150}.

\subsubsection{Vision Domain}

\paragraph{CIFAR-100}

CIFAR-100~\cite{cifar100} consists of 50{,}000 training and 10{,}000 test color images of size 32$\times$32. 
The dataset spans 100 fine-grained object categories, grouped into 20 superclasses, with each class containing 600 images (500 train and 100 test). 
For the FCIL task configuration, the 100 classes are divided into 10 sequential tasks of 10 classes each.

\paragraph{ImageNet-R}

ImageNet-R~\cite{imagenetr} is a curated 30{,}000-image benchmark containing 200 ImageNet-1K categories rendered in non-photorealistic styles such as cartoons, sketches, paintings, and graphics. 
For the FCIL task configuration, the 200 classes are split into 10 tasks, each containing 20 classes.

\paragraph{ImageNet-A}

ImageNet-A~\cite{imageneta} contains approximately 7{,}500 natural images from 200 ImageNet-1K classes, filtered through an adversarial selection process that identifies samples particularly challenging for standard ImageNet-trained models. 
For the FCIL task configuration, the 200 classes are divided into 10 tasks of 20 classes each.

\paragraph{EuroSAT}

EuroSAT~\cite{helber2019eurosat} is a land-use and land-cover classification dataset constructed from Sentinel-2 satellite imagery. 
It includes 27{,}000 images with 13 spectral bands across 10 categories such as \emph{forest}, \emph{river}, \emph{residential}, and \emph{pasture}. 
For the FCIL task configuration, the 10 classes are split into 5 tasks of 2 classes each.

\paragraph{Cars-196}

Cars-196~\cite{krause2013cars} contains 16{,}185 high-resolution car images distributed across 196 fine-grained categories, where each class corresponds to a specific car make, model, and year. 
For the FCIL task configuration, we allocate 19 classes to each of the first 9 tasks and assign the remaining 25 classes to the final task.

\paragraph{CUB-200}

CUB-200~\cite{cub200} (Caltech-UCSD Birds) is a fine-grained bird classification dataset containing 11{,}788 images across 200 bird species, with rich annotations such as bounding boxes, part locations, and semantic attributes. 
For the FCIL task configuration, the 200 classes are evenly divided into 10 tasks of 20 classes each.

\subsubsection{Language Domain}

\paragraph{20-Newsgroups}

20-Newsgroups~\cite{20nng} is a widely used text classification dataset containing approximately 20{,}000 documents evenly distributed across 20 topics, including sports, politics, science, and religion. 
For the FCIL task configuration, the 20 topics are split into 10 tasks containing 2 topics each.

\paragraph{CLINC-150}

CLINC-150~\cite{clinc150} is an intent classification dataset composed of 23{,}700 user utterances spanning 150 intents across 10 domains, such as banking, credit cards, travel, and utilities. 
For the FCIL task configuration, the 150 intents are divided into 10 tasks of 15 intents each.

\begin{algorithm}[t]
\caption{\textbf{Core procedure of \methodlong{}}}
\label{alg:sum_core}
\begin{algorithmic}[1]
\footnotesize
\STATE \textbf{Input:} Task sequence $\{1\dots T\}$, number of rounds $R$, base model $\theta_{\text{pre}}$, thresholds $(z_{\text{thr}}, k_{\text{pct}})$, scaling coefficient $\lambda_S$
\STATE \textbf{Output:} Final global model $\theta_G^{(T)}$, unified vector $\tau_{\text{uni}}$, and modulators $\{\mathcal{M}_k\}$ for all $k$
\STATE Initialize $\theta_G \leftarrow \theta_{\text{pre}}$, temporal basis $\mathcal{B}\leftarrow\emptyset$

\FOR{\textbf{each task} $k = 1$ \textbf{to} $T$}

    \STATE \colorbox{NavyBlue!10}{\# Spatial \method{} across clients}

    \FOR{$t = 1$ to $R$}
        \STATE Broadcast $\theta_G$ to clients and perform local training
        \STATE Compute client vectors $\{\tau_i^{t,k}\} \leftarrow \theta_i^{t,k} - \theta_G$
        \STATE Apply z-score trimming to $\{\tau_i^{t,k}\}$
        \STATE \textbf{Surgery:} 
        $\hat{\tau}_i^{t,k} \leftarrow 
        \tau_i^{t,k} - 
        \sum_{j\neq i}
        \frac{\langle \tau_i^{t,k},\tau_j^{t,k}\rangle}
        {\|\tau_j^{t,k}\|_2^2}\tau_j^{t,k}$
        \STATE $\Delta\theta_G \leftarrow \lambda_S \sum_i \hat{\tau}_i^{t,k}$
        \STATE $\theta_G \leftarrow \theta_G + \Delta\theta_G$
    \ENDFOR

    \STATE \colorbox{green!10}{\# Temporal \method{} across tasks}

    \STATE Compute task vector $\tau_k \leftarrow \theta_G - \theta_{\text{pre}}$
    \STATE \textbf{Online Surgery:} 
    $\hat{\tau}_k \leftarrow 
    \tau_k - 
    \sum_{b\in\mathcal{B}}
    \frac{\langle \tau_k,b\rangle}{\|b\|_2^2}b$
    \STATE Append $\hat{\tau}_k$ to $\mathcal{B}$

    \STATE \colorbox{red!10}{\# Construct Inference-ready Module}

    \STATE Apply top-$k_{\text{pct}}$ sparsification to $\mathcal{B}$ to obtain $\bar{\mathcal{B}}$
    \STATE $\tau_{\text{uni}} \leftarrow \textsc{ElectSign}(\bar{\mathcal{B}})$
    \STATE Derive $\mathcal{M}_k = \{m_k, s_k\}$ and store (or deploy) $\{\tau_{\text{uni}}, \mathcal{M}_k\}$

\ENDFOR

\end{algorithmic}
\end{algorithm}

\section{Additional Experiments}
\label{sup:additional_experiments}

\subsection{Communication Cost Analysis}
\label{supsec:computation_cost_analysis}

\begin{table}[t]
\centering
\footnotesize
\caption{\textbf{Communication cost comparison across FCIL baselines.} We report upload and download sizes for each method. For \method{}, values in parentheses denote the additional download required when a user requests inference.}
\setlength{\tabcolsep}{1.5em}
\adjustbox{max width=0.55\textwidth}{
\begin{tabular}{l|cc}
\toprule
\textbf{Method} & \textbf{Download (MB)} & \textbf{Upload (MB)} \\
\midrule
ViT & 343.79 & 343.79 \\
LoRA & 6.02 & 6.02 \\
\midrule
EWC           & 343.79 & 343.79 \\
CCVR          & 343.79 & 343.86 \\
L2P           & 0.57   & 0.57 \\
CODA-P        & 15.66  & 15.66 \\
FedProto      & 343.79 & 343.83 \\
TARGET        & 343.79 & 343.79 \\
PILoRA        & 10.29  & 10.03 \\
LoRM          & 6.02   & 6.02 \\
\midrule
\rowcolor{gray!10} \method{}      & 343.79 (+187.26) & 343.79 \\
\rowcolor{gray!10} \method-LoRA   & 6.02 (+5.65)     & 6.02 \\
\bottomrule
\end{tabular}}
\label{tab:communication_cost}
\end{table}

We first analyze the communication cost incurred when a client uploads and downloads model components. In Tab.~\ref{tab:communication_cost}, we report the upload and download sizes for the main baselines. For \method{}, the numbers in parentheses denote the additional download cost when the user requests inference and needs to fetch the corresponding module. 

During this process, although \method{} involves extra download overhead due to FP16 conversion and masked-module bit packing, this is not the key bottleneck in the FCIL setting~\cite{mhanna2024countering,tziolas2025fclsurvey}.
In contrast, the actual bottleneck, the upload cost, is comparable only to that of uploading a ViT or a LoRA module. Importantly, \method{} does not require any extra upload traffic for prototypes or additional statistics. This holds without any server-side retraining, replay buffers, or synthetic data generation.

% mask -16.15 MB / 0.87 MB

\subsection{Inter-Client Vector Similarity Analysis}
\label{supsec:task_relation_analysis}

\begin{figure}[t]
    \centering
    \includegraphics[width=0.80\linewidth]{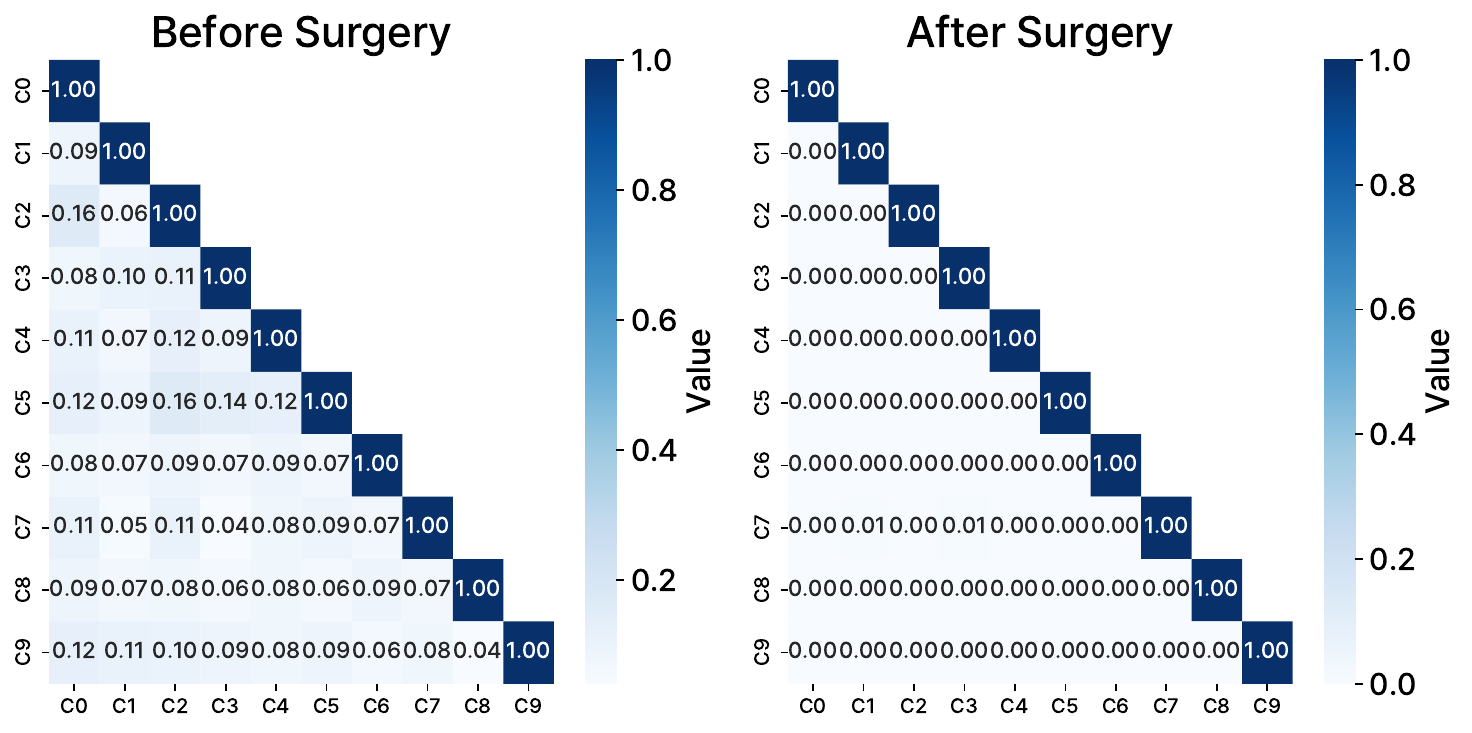}
    \caption{\textbf{Cosine similarity between inter-client vectors before and after Spatial \method{}.} Client vectors exhibit MTL-like positive alignment before surgery and become fully orthogonal afterward.}
    \label{fig:cos_inter_clients}
\end{figure}
\begin{figure}[t]
    \centering
    \begin{subfigure}{0.6\linewidth}
        \centering
        \includegraphics[width=\linewidth]{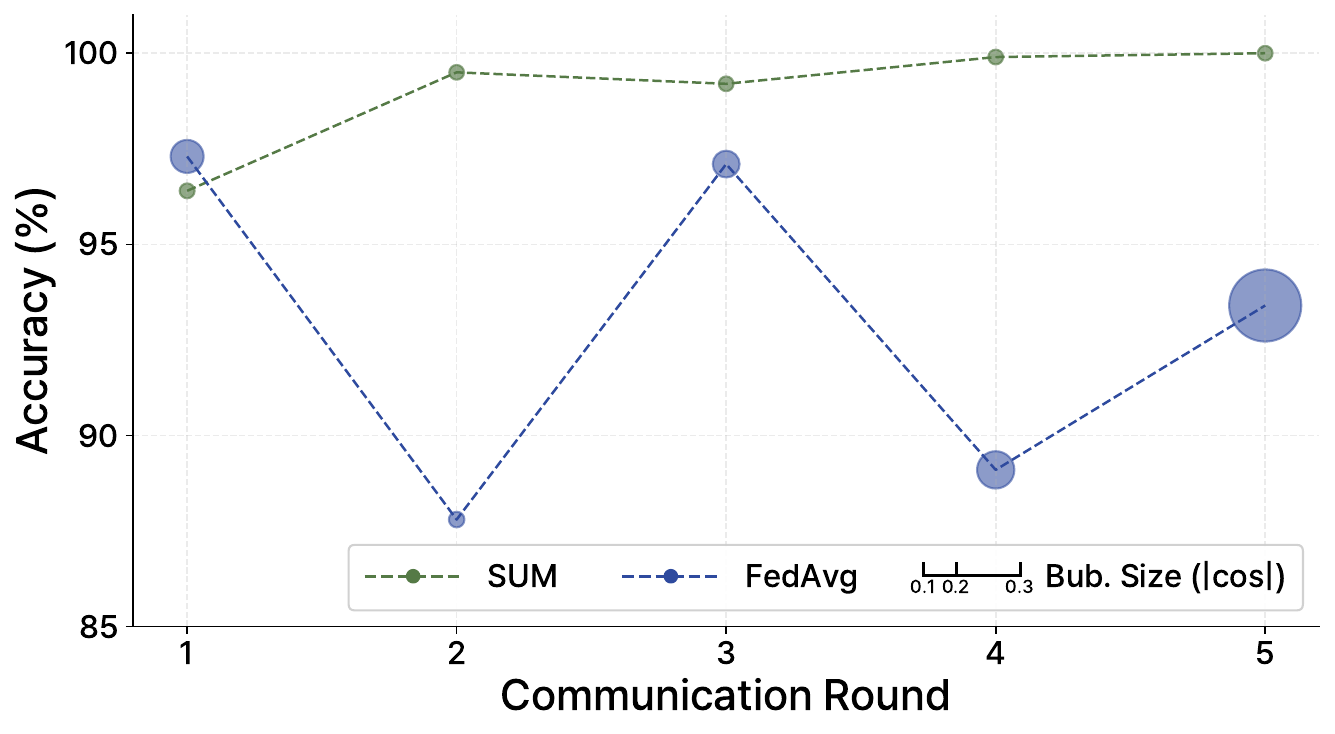}
        \caption{Evolution of average cosine similarity and accuracy across communication rounds. \method{} maintains strong performance while keeping client vectors independent.}
        \label{fig:cos_sim_round}
    \end{subfigure}
    \hfill
    \begin{subfigure}{0.35\linewidth}
        \centering
        \includegraphics[width=\linewidth]{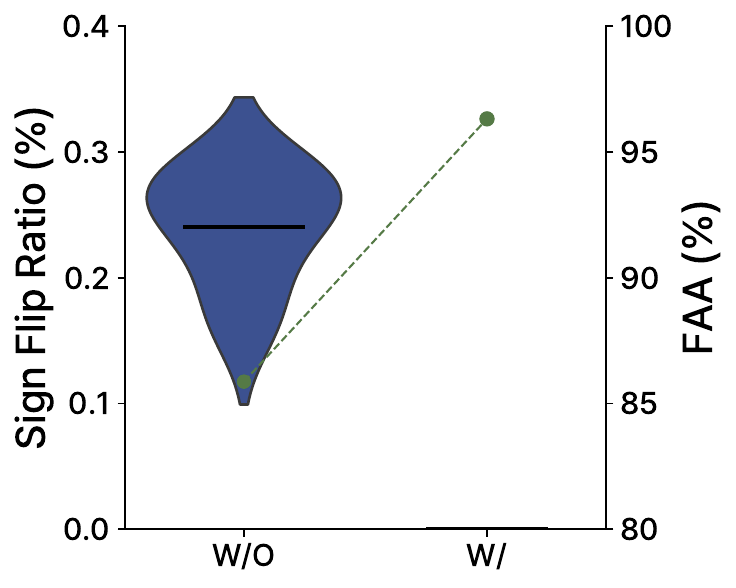}
        \caption{Sign-flip ratio during spatial \method{} with and without Z-score trimming.}
        \label{fig:z_score}
    \end{subfigure}

    \caption{Analysis across communication rounds and aggregation behavior.}
    \label{fig:cos_zscore}
\end{figure}

To support our view that FCIL can be interpreted from an MTL perspective, and that \method{} encourages effective MTL, we visualize the cosine similarity between inter-client vectors in Fig.~\ref{fig:cos_inter_clients}, both before and after the Spatial \method{} process. Even before surgery, the client vectors are almost orthogonal but remain positively aligned, resembling the structure commonly observed among task vectors in MTL~\cite{ilharco2022editing,yadav2024ties}. This observation reinforces our claim that FCIL can indeed be seen as a form of MTL.
Moreover, after surgery, all pairwise client vectors become perfect orthogonal. This aligns with findings in the model-merging literature, where orthogonality between adaptation vectors is known to be a desirable property for constructing effective merged MTL models~\cite{yang2023adamerging}.

In Fig.~\ref{fig:cos_sim_round}, we further show how the average cosine similarity between client vectors and their corresponding performance evolves as the communication rounds progress, comparing \method{} with FedAvg~\cite{fedavg}. As a result, our approach demonstrates that simply applying a \textit{Surgery-then-Merge} procedure maintains strong performance while preserving inter-client independence, effectively merging them into MTL-like client vectors.

\subsection{Effects of Z-Score Trimming}
\label{supsec:z_score_trimming}

Fig.~\ref{fig:z_score} shows the distribution of the sign-flip ratio during model aggregation in the Spatial \method{} rounds, comparing cases with and without Z-score trimming. Without trimming, the parameter sign-flip ratio is roughly 20–30\%, and prior work has shown that sign flips during the merging of adaptation vectors can significantly degrade performance~\cite{yadav2024ties,huang2024emr}.
With Z-score trimming, however, we observe that sign flips disappear entirely during vector aggregation (all values are 0.0 in the plot). Correspondingly, FAA improves by more than 10\% compared to the non-trimmed setting, highlighting the importance of this mechanism.

\subsection{Classifier-Head Merging Strategies}
\label{supsec:head_merging_choices}

 \begin{table}[t]
\centering
\footnotesize
\caption{\textbf{Comparison of classifier-head aggregation methods under different disturbance levels.}}
\setlength{\tabcolsep}{2.1em}
\adjustbox{max width=0.55\textwidth}{
\begin{tabular}{l cc}
\toprule
\multirow{2}{*}{\textbf{Method}} & \multicolumn{2}{c}{\textbf{Disturb. $\beta$}} \\
\cmidrule{2-3}
 & \textbf{0.5} & \textbf{0.01} \\
\midrule
Simple Agg.  & \textbf{97.57} & \textbf{96.36} \\
RegMean & 97.03 & 96.32 \\ 
\bottomrule
\end{tabular}}
\label{tab:merging_head}
\end{table}
 
\method{} is applied to the backbone, \ie the vision or language encoder, while for classifier-head aggregation we use RegMean as the default method~\cite{jin2022dataless}. We compare RegMean with a simple client-head aggregation baseline, where we compute each client’s classifier-head adaptation vector, sum these vectors, and add the result to the previous round’s head parameters.
As shown in Tab.~\ref{tab:merging_head}, RegMean and simple aggregation yield no noticeable performance difference, suggesting that head merging is not a major performance bottleneck in FCIL when using \method{}. That said, due to its numerical stability with respect to the scale of head adaptation vectors, we still recommend using RegMean.

\subsection{Catastrophic Forgetting Under Temporal Interference}
\label{supsec:forgetting}

\begin{figure}[t]
    \centering
    \includegraphics[width=0.65\linewidth]{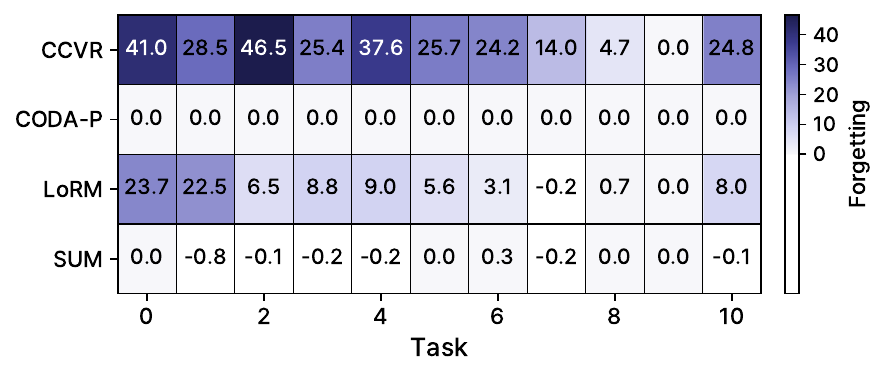}
 \caption{\textbf{Forgetting metrics in FCIL.} FL-specific methods forget heavily, CL-based methods reduce forgetting, and \method{} achieves near-zero or even negative forgetting.}
    \label{fig:forgetting}
\end{figure}

Catastrophic forgetting caused by temporal interference in FCIL is one of its key challenges and leads to severe ST-CF. In Fig.~\ref{fig:forgetting}, we report the forgetting metrics of the baseline methods. We measure forgetting as the performance difference between (i) immediately after each task and (ii) after all tasks are completed. Larger positive values indicate stronger forgetting, whereas values close to zero indicate little to no forgetting.
As a result, we observe that FL methods such as CCVR remain highly vulnerable to forgetting, whereas applying CL methods such as CODA-Prompt to FCIL helps mitigate forgetting more effectively. LoRM, an FCIL-specific method, falls somewhere in between, as it does not fully resolve the forgetting issue. In contrast, our \method{} exhibits no forgetting and even shows slight performance improvements in some cases when evaluated at the end compared to immediately after each task. This highlights the effectiveness of online surgery, sparsification, and inference modulation, which together produce more specialized task vectors and enable more reliable merging.

\subsection{Convergence Behavior and Training Loss}
\label{supsec:loss}

\begin{figure}[t]
    \centering
    \includegraphics[width=0.65\linewidth]{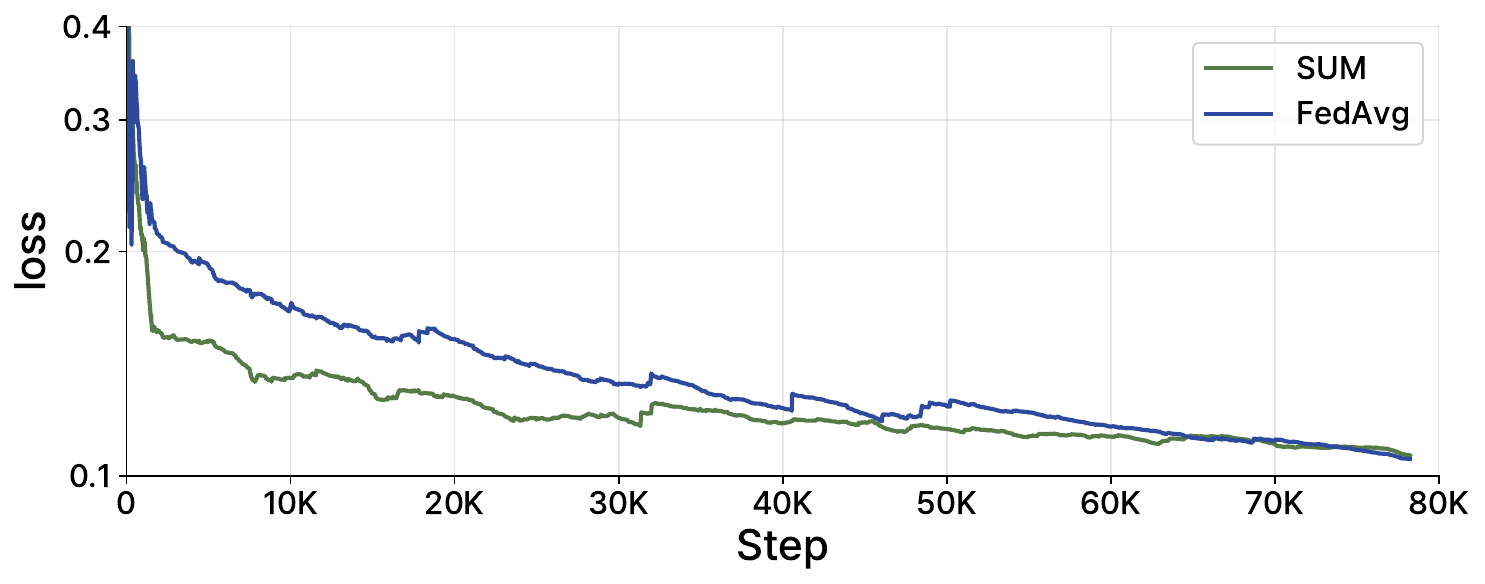}
    \caption{\textbf{Training loss comparison between \method{} and FedAvg.} \method{} exhibits lower loss in early rounds and approaches convergence more quickly than standard aggregation.}
    \label{fig:loss}
\end{figure}

Thm.~\ref{thm:main} motivates examining whether the local training-objective behavior
predicted by the aggregation-level analysis appears empirically. To validate this, Fig.\ref{fig:loss} compares the training loss of \method{} with that of FedAvg, the standard aggregation method used in FL. We observe that \method{} consistently achieves a lower training loss, especially during the early stages of learning. While both methods eventually converge to similar levels in the later rounds, \method{} reaches the convergence region more quickly due to its lower initial loss.

\begin{table}[t]
\centering
\caption{\textbf{Sharpness and ICS (Class Margin) comparison on CIFAR-100 ($\beta=0.05$).}}
\label{tab:sharpness_margin}
\setlength{\tabcolsep}{3.3pt}
\renewcommand{\arraystretch}{0.9}
\scriptsize
\begin{tabular}{lcc}
\toprule
\textbf{Method} & \textbf{Sharpness} $\downarrow$ & \textbf{Class Margin} $\uparrow$ \\
\midrule
Joint & 0.00318 & 0.362 \\
\textbf{\method{}} & \textbf{0.00077} & \textbf{0.482} \\
\bottomrule
\end{tabular}
\end{table}

\subsection{vs. Joint Training}
\label{supsec:vs_joint}

As discussed in Sec.~\ref{subsec:analysis}, we compare \method{} with centralized joint training in terms of optimization landscape and representation geometry.
Tab.~\ref{tab:sharpness_margin} reports the corresponding measurements on CIFAR-100 ($\beta=0.05$).

\newpage

\begin{table*}[t] \footnotesize
\caption{\textbf{CIFAR-100 hyperparameters.}}
\renewcommand{\arraystretch}{0.8}
\centering
\setlength{\tabcolsep}{0.75em}
\adjustbox{max width=1.0\textwidth}{
\begin{tabular}{llll} \toprule
\textbf{Distrib.} $\boldsymbol{\beta}$ & $0.5$ & $0.1$ & $0.05$ \\
\midrule
    EwC         & \textit{lr}: 1e-5 & \textit{lr}: 1e-5 & \textit{lr}: 1e-5 \\
    LwF         & \textit{lr}: 1e-5 & \textit{lr}: 1e-5 & \textit{lr}: 1e-5 \\
    FisherAVG   & \textit{lr}: 1e-5 & \textit{lr}: 1e-5 & \textit{lr}: 1e-5 \\
    RegMean     & \textit{lr}: 1e-5; $\gamma$: (0.5, 0.5) & \textit{lr}: 1e-5; $\gamma$: (0.5, 0.5) & \textit{lr}: 1e-5; $\gamma$: (0.5, 0.5) \\
    CCVR        & \textit{lr}: 1e-5 & \textit{lr}: 1e-5 & \textit{lr}: 1e-5 \\
    L2P	        & \textit{lr}: 3e-2 & \textit{lr}: 3e-2 & \textit{lr}: 3e-2 \\
    CODA-P	    & \textit{lr}: 1e-3 & \textit{lr}: 1e-3 & \textit{lr}: 1e-3 \\
    FedProto    & \textit{lr}: 1e-5 & \textit{lr}: 1e-5 & \textit{lr}: 1e-5 \\
    TARGET      & \textit{lr}: 1e-5; $\lambda_{\text{KL}}$: 25; $\textit{g}_{ep}$: 30 & 
                   \textit{lr}: 1e-5; $\lambda_{\text{KL}}$: 25; $\textit{g}_{ep}$: 30 & 
                   \textit{lr}: 1e-5; $\lambda_{\text{KL}}$: 25; $\textit{g}_{ep}$: 30 \\
    PILoRA      & \textit{lr}: 2e-2; $\textit{lr}_{pr}$: 1e-4 & \textit{lr}: 2e-2; $\textit{lr}_{pr}$: 1e-4 & \textit{lr}: 2e-2; $\textit{lr}_{pr}$: 1e-4 \\
    FOT      & \textit{lr}: 1e-5; $\epsilon_{fot}$: 0.9; $\epsilon_{inc}$: 0.0 &
                   \textit{lr}: 1e-5; $\epsilon_{fot}$: 0.9; $\epsilon_{inc}$: 0.0 &
                   \textit{lr}: 1e-5; $\epsilon_{fot}$: 0.9; $\epsilon_{inc}$: 0.0  \\    
    LoRM        & \textit{lr}: 3e-4; \textit{r}: 1; $\gamma$: (0, 0.5) &  \textit{lr}: 1e-4; \textit{r}: 16; $\gamma$: (0, 0.5) & \textit{lr}: 5e-4; \textit{r}: 16; $\gamma$: (0, 0.5) \\
    \midrule \rowcolor{gray!10}  \textbf{\method{}} & \textit{lr}: 1e-5; $z_{\text{thr}}$: 4.5 ; $\lambda_S$: 0.4 ; $k_{\text{pct}}$: 0.1 \ & \textit{lr}: 1e-5; $z_{\text{thr}}$: 4.5 ; $\lambda_S$: 0.4 ; $k_{\text{pct}}$: 0.1 & \textit{lr}: 1e-5; $z_{\text{thr}}$: 4.5 ; $\lambda_S$: 0.6 ; $k_{\text{pct}}$: 0.05 \\
\bottomrule
\end{tabular}}
\label{tab:cifar_hyperparam}
\end{table*}

\begin{table*}[t] \footnotesize
\caption{\textbf{ImageNet-R hyperparameters.}}
\renewcommand{\arraystretch}{0.8}
\centering
\setlength{\tabcolsep}{0.75em}{
\adjustbox{max width=1.0\textwidth}{
\begin{tabular}{llll} \toprule
\textbf{Distrib.} $\boldsymbol{\beta}$ & $0.5$ & $0.1$ & $0.05$ \\
\midrule
    EwC         & \textit{lr}: 1e-5 & \textit{lr}: 1e-5 & \textit{lr}: 1e-5 \\
    LwF         & \textit{lr}: 1e-5 & \textit{lr}: 1e-5 & \textit{lr}: 3e-5 \\
    FisherAVG   & \textit{lr}: 1e-5 & \textit{lr}: 1e-5 & \textit{lr}: 1e-5 \\
    RegMean     & \textit{lr}: 1e-5; $\gamma$: (0.1, 0.1) & \textit{lr}: 1e-5; $\gamma$: (0.1, 0.1) & \textit{lr}: 1e-5; $\gamma$: (0.1, 0.1) \\
    CCVR        & \textit{lr}: 1e-5 & \textit{lr}: 1e-5 & \textit{lr}: 1e-5 \\
    L2P	        & \textit{lr}: 3e-2 & \textit{lr}: 3e-2 & \textit{lr}: 3e-2 \\
    CODA-P	    & \textit{lr}: 1e-3 & \textit{lr}: 1e-3 & \textit{lr}: 1e-3 \\
    FedProto    & \textit{lr}: 1e-5 & \textit{lr}: 1e-5 & \textit{lr}: 3e-5 \\
    TARGET      & \textit{lr}: 1e-5; $\lambda_{\text{KL}}$: 25; $\textit{g}_{ep}$: 30 &
                   \textit{lr}: 1e-5; $\lambda_{\text{KL}}$: 25; $\textit{g}_{ep}$: 30 &
                   \textit{lr}: 1e-5; $\lambda_{\text{KL}}$: 25; $\textit{g}_{ep}$: 30 \\
    PILoRA      & \textit{lr}: 2e-2; $\textit{lr}_{pr}$: 1e-4 &
                   \textit{lr}: 2e-2; $\textit{lr}_{pr}$: 1e-4 &
                   \textit{lr}: 2e-2; $\textit{lr}_{pr}$: 1e-4 \\
    FOT      & \textit{lr}: 1e-4; $\epsilon_{fot}$: 0.9; $\epsilon_{inc}$: 0.0 &
                   \textit{lr}: 1e-4; $\epsilon_{fot}$: 0.9; $\epsilon_{inc}$: 0.0 &
                   \textit{lr}: 1e-4; $\epsilon_{fot}$: 0.9; $\epsilon_{inc}$: 0.0 \\
    LoRM        & \textit{lr}: 3e-3; \textit{r}: 2; $\gamma$: (0, 0.5) &  \textit{lr}: 1e-3; \textit{r}: 32; $\gamma$: (0, 0.5) & \textit{lr}: 1e-3; \textit{r}: 16; $\gamma$: (0, 0.5) \\
        \midrule \rowcolor{gray!10}  \textbf{\method{}} & \textit{lr}: 1e-5; $z_{\text{thr}}$: 4.5 ; $\lambda_S$: 1.0 ; $k_{\text{pct}}$: 0.05\ & \textit{lr}: 1e-5; $z_{\text{thr}}$: 4.5 ; $\lambda_S$: 0.6 ; $k_{\text{pct}}$: 0.05& \textit{lr}: 1e-5; $z_{\text{thr}}$: 4.5 ; $\lambda_S$: 0.8; $k_{\text{pct}}$: 0.05\\
\bottomrule
\end{tabular}}}
\label{tab:imagenetr_hyperparam}
\end{table*}

\begin{table*}[t] \footnotesize
\caption{\textbf{ImageNet-A hyperparameters.}}
\renewcommand{\arraystretch}{0.8}
\centering
\setlength{\tabcolsep}{0.75em}{
\adjustbox{max width=1.0\textwidth}{
\begin{tabular}{llll} \toprule
\textbf{Distrib.} $\boldsymbol{\beta}$ & $1.0$ & $0.5$ & $0.2$ \\
\midrule
    EwC         & \textit{lr}: 1e-5 & \textit{lr}: 1e-5 & \textit{lr}: 1e-5 \\
    LwF         & \textit{lr}: 1e-5 & \textit{lr}: 1e-5 & \textit{lr}: 3e-5 \\
    FisherAVG   & \textit{lr}: 1e-5 & \textit{lr}: 1e-5 & \textit{lr}: 1e-5 \\
    RegMean     & \textit{lr}: 1e-5; $\gamma$: (0.1, 0.1) &
                   \textit{lr}: 1e-5; $\gamma$: (0.1, 0.1) &
                   \textit{lr}: 1e-5; $\gamma$: (0.1, 0.1) \\
    CCVR        & \textit{lr}: 1e-5 & \textit{lr}: 1e-5 & \textit{lr}: 1e-5 \\
    L2P	        & \textit{lr}: 3e-2 & \textit{lr}: 3e-2 & \textit{lr}: 3e-1 \\
    CODA-P	    & \textit{lr}: 1e-2 & \textit{lr}: 1e-2 & \textit{lr}: 1e-2 \\
    FedProto    & \textit{lr}: 3e-5 & \textit{lr}: 1e-5 & \textit{lr}: 1e-5 \\
    TARGET      & \textit{lr}: 1e-4; $\lambda_{\text{KL}}$: 25; $\textit{g}_{ep}$: 30 &
                   \textit{lr}: 1e-4; $\lambda_{\text{KL}}$: 25; $\textit{g}_{ep}$: 30 &
                   \textit{lr}: 1e-4; $\lambda_{\text{KL}}$: 25; $\textit{g}_{ep}$: 30 \\
    PILoRA      & \textit{lr}: 2e-2; $\textit{lr}_{pr}$: 1e-4 &
                   \textit{lr}: 1e-2; $\textit{lr}_{pr}$: 1e-4 &
                   \textit{lr}: 2e-2; $\textit{lr}_{pr}$: 1e-4 \\
    FOT      & \textit{lr}: 1e-4; $\epsilon_{fot}$: 0.9; $\epsilon_{inc}$: 0.0 &
                   \textit{lr}: 1e-5; $\epsilon_{fot}$: 0.9; $\epsilon_{inc}$: 0.0 &
                   \textit{lr}: 1e-5; $\epsilon_{fot}$: 0.9; $\epsilon_{inc}$: 0.0  \\
    LoRM        & \textit{lr}: 1e-2; \textit{r}: 4; $\gamma$: (0, 0.5) &
                   \textit{lr}: 1e-2; \textit{r}: 4; $\gamma$: (0, 0.5) &
                   \textit{lr}: 1e-2; \textit{r}: 4; $\gamma$: (0, 0.5) \\
        \midrule \rowcolor{gray!10}  \textbf{\method{}} & \textit{lr}: 4e-5; $z_{\text{thr}}$: 4.5 ; $\lambda_S$: 0.4 ; $k_{\text{pct}}$: 0.05\ & \textit{lr}: 4e-5; $z_{\text{thr}}$: 4.5 ; $\lambda_S$: 0.3 ; $k_{\text{pct}}$: 0.05& \textit{lr}: 4e-5; $z_{\text{thr}}$: 4.5 ; $\lambda_S$: 0.3 ; $k_{\text{pct}}$: 0.05\\
\bottomrule
\end{tabular}}}
\label{tab:imageneta_hyperparam}
\end{table*}

\begin{table*}[t] \footnotesize
\caption{\textbf{EuroSAT hyperparameters.}}
\renewcommand{\arraystretch}{0.8}
\centering
\setlength{\tabcolsep}{0.75em}{
\adjustbox{max width=1.0\textwidth}{
\begin{tabular}{llll} \toprule
\textbf{Distrib.} $\boldsymbol{\beta}$ & $1.0$ & $0.5$ & $0.2$ \\
\midrule
    EwC         & \textit{lr}: 1e-5 & \textit{lr}: 1e-5 & \textit{lr}: 1e-5 \\
    LwF         & \textit{lr}: 1e-5 & \textit{lr}: 1e-5 & \textit{lr}: 3e-5 \\
    FisherAVG   & \textit{lr}: 1e-5 & \textit{lr}: 1e-5 & \textit{lr}: 1e-5 \\
    RegMean     & \textit{lr}: 1e-5; $\gamma$: (0.1, 0.1) &
                   \textit{lr}: 1e-5; $\gamma$: (0.1, 0.1) &
                   \textit{lr}: 1e-5; $\gamma$: (0.1, 0.1) \\
    CCVR        & \textit{lr}: 1e-5 & \textit{lr}: 1e-5 & \textit{lr}: 1e-5 \\
    L2P	        & \textit{lr}: 3e-2 & \textit{lr}: 3e-2 & \textit{lr}: 3e-2 \\
    CODA-P	    & \textit{lr}: 1e-3 & \textit{lr}: 1e-3 & \textit{lr}: 1e-3 \\
    FedProto    & \textit{lr}: 3e-5 & \textit{lr}: 1e-5 & \textit{lr}: 1e-5 \\
    TARGET      & \textit{lr}: 1e-5; $\lambda_{\text{KL}}$: 25; $\textit{g}_{ep}$: 30 &
                   \textit{lr}: 1e-5; $\lambda_{\text{KL}}$: 25; $\textit{g}_{ep}$: 30 &
                   \textit{lr}: 1e-5; $\lambda_{\text{KL}}$: 25; $\textit{g}_{ep}$: 30 \\
    PILoRA      & \textit{lr}: 2e-2; $\textit{lr}_{pr}$: 1e-4 &
                   \textit{lr}: 2e-2; $\textit{lr}_{pr}$: 1e-4 &
                   \textit{lr}: 2e-2; $\textit{lr}_{pr}$: 1e-4 \\
    FOT      & \textit{lr}: 1e-5; $\epsilon_{fot}$: 0.9; $\epsilon_{inc}$: 0.0 &
                   \textit{lr}: 1e-5; $\epsilon_{fot}$: 0.9; $\epsilon_{inc}$: 0.0 &
                   \textit{lr}: 1e-5; $\epsilon_{fot}$: 0.9; $\epsilon_{inc}$: 0.0  \\
    LoRM        & \textit{lr}: 3e-3; \textit{r}: 1; $\gamma$: (0, 0.5) &
                   \textit{lr}: 3e-3; \textit{r}: 1; $\gamma$: (0, 0.5) &
                   \textit{lr}: 1e-3; \textit{r}: 4; $\gamma$: (0, 0.5) \\
        \midrule \rowcolor{gray!10}  \textbf{\method{}} & \textit{lr}: 3e-5; $z_{\text{thr}}$: 4.5 ; $\lambda_S$: 0.2 ; $k_{\text{pct}}$: 0.1\ & \textit{lr}: 3e-5; $z_{\text{thr}}$: 4.5 ; $\lambda_S$: 0.1 ; $k_{\text{pct}}$: 0.1 & \textit{lr}: 3e-5; $z_{\text{thr}}$: 4.5 ; $\lambda_S$: 0.3 ; $k_{\text{pct}}$: 0.1\\
\bottomrule
\end{tabular}}}
\label{tab:eurosat_hyperparam}
\end{table*}

\begin{table*}[t] \footnotesize
\caption{\textbf{Cars-196 hyperparameters.}}
\renewcommand{\arraystretch}{0.8}
\centering
\setlength{\tabcolsep}{0.75em}{
\adjustbox{max width=1.0\textwidth}{
\begin{tabular}{llll} \toprule
\textbf{Distrib.} $\boldsymbol{\beta}$ & $1.0$ & $0.5$ & $0.2$ \\
\midrule
    EwC         & \textit{lr}: 1e-5 & \textit{lr}: 1e-5 & \textit{lr}: 1e-5 \\
    LwF         & \textit{lr}: 1e-5 & \textit{lr}: 1e-5 & \textit{lr}: 3e-5 \\
    FisherAVG   & \textit{lr}: 1e-5 & \textit{lr}: 1e-5 & \textit{lr}: 1e-5 \\
    RegMean     & \textit{lr}: 1e-5; $\gamma$: (0.1, 0.1) &
                   \textit{lr}: 1e-5; $\gamma$: (0.1, 0.1) &
                   \textit{lr}: 1e-5; $\gamma$: (0.1, 0.1) \\
    CCVR        & \textit{lr}: 1e-5 & \textit{lr}: 1e-5 & \textit{lr}: 1e-5 \\
    L2P	        & \textit{lr}: 3e-2 & \textit{lr}: 3e-2 & \textit{lr}: 3e-2 \\
    CODA-P	    & \textit{lr}: 3e-2 & \textit{lr}: 3e-2 & \textit{lr}: 3e-2 \\
    FedProto    & \textit{lr}: 1e-5 & \textit{lr}: 1e-5 & \textit{lr}: 1e-5 \\
    TARGET      & \textit{lr}: 1e-4; $\lambda_{\text{KL}}$: 25; $\textit{g}_{ep}$: 30 &
                   \textit{lr}: 1e-4; $\lambda_{\text{KL}}$: 25; $\textit{g}_{ep}$: 30 &
                   \textit{lr}: 1e-4; $\lambda_{\text{KL}}$: 25; $\textit{g}_{ep}$: 30 \\
    PILoRA      & \textit{lr}: 1e-1; $\textit{lr}_{pr}$: 1e-4 &
                   \textit{lr}: 1e-1; $\textit{lr}_{pr}$: 1e-4 &
                   \textit{lr}: 1e-1; $\textit{lr}_{pr}$: 1e-4 \\
    FOT     & \textit{lr}: 1e-4; $\epsilon_{fot}$: 0.9; $\epsilon_{inc}$: 0.0 &
                   \textit{lr}: 1e-4; $\epsilon_{fot}$: 0.9; $\epsilon_{inc}$: 0.0 &
                   \textit{lr}: 1e-4; $\epsilon_{fot}$: 0.9; $\epsilon_{inc}$: 0.0  \\
    LoRM        & \textit{lr}: 1e-2; \textit{r}: 8; $\gamma$: (0, 0.5) &
                   \textit{lr}: 1e-2; \textit{r}: 8; $\gamma$: (0, 0.5) &
                   \textit{lr}: 1e-2; \textit{r}: 4; $\gamma$: (0, 0.5) \\
        \midrule \rowcolor{gray!10}  \textbf{\method{}} & \textit{lr}: 5e-5; $z_{\text{thr}}$: 4.5 ; $\lambda_S$: 0.2 ; $k_{\text{pct}}$: 0.1\ & \textit{lr}: 5e-5; $z_{\text{thr}}$: 4.5 ; $\lambda_S$: 0.3 ; $k_{\text{pct}}$: 0.1 & \textit{lr}: 5e-5; $z_{\text{thr}}$: 4.5 ; $\lambda_S$: 0.2 ; $k_{\text{pct}}$: 0.1\\
\bottomrule
\end{tabular}}}
\label{tab:cars_hyperparam}
\end{table*}

\begin{table*}[t] \footnotesize
\caption{\textbf{CUB-200 hyperparameters.}}
\renewcommand{\arraystretch}{0.8}
\centering
\setlength{\tabcolsep}{0.75em}{
\adjustbox{max width=1.0\textwidth}{
\begin{tabular}{llll} \toprule
\textbf{Distrib.} $\boldsymbol{\beta}$ & $1.0$ & $0.5$ & $0.2$ \\
\midrule
    EwC         & \textit{lr}: 1e-5 & \textit{lr}: 1e-5 & \textit{lr}: 1e-5 \\
    LwF         & \textit{lr}: 1e-5 & \textit{lr}: 1e-5 & \textit{lr}: 3e-5 \\
    FisherAVG   & \textit{lr}: 1e-5 & \textit{lr}: 1e-5 & \textit{lr}: 1e-5 \\
    RegMean     & \textit{lr}: 1e-5; $\gamma$: (0.1, 0.1) &
                   \textit{lr}: 1e-5; $\gamma$: (0.1, 0.1) &
                   \textit{lr}: 1e-5; $\gamma$: (0.1, 0.1) \\
    CCVR        & \textit{lr}: 1e-5 & \textit{lr}: 1e-5 & \textit{lr}: 1e-5 \\
    L2P	        & \textit{lr}: 3e-1 & \textit{lr}: 3e-1 & \textit{lr}: 3e-1 \\
    CODA-P	    & \textit{lr}: 1e-3 & \textit{lr}: 1e-3 & \textit{lr}: 1e-3 \\
    FedProto    & \textit{lr}: 1e-5 & \textit{lr}: 1e-5 & \textit{lr}: 1e-5 \\
    TARGET      & \textit{lr}: 1e-4; $\lambda_{\text{KL}}$: 25; $\textit{g}_{ep}$: 30 &
                   \textit{lr}: 1e-4; $\lambda_{\text{KL}}$: 25; $\textit{g}_{ep}$: 30 &
                   \textit{lr}: 1e-4; $\lambda_{\text{KL}}$: 25; $\textit{g}_{ep}$: 30 \\
    PILoRA      & \textit{lr}: 1; $\textit{lr}_{pr}$: 1e-4 &
                   \textit{lr}: 1; $\textit{lr}_{pr}$: 1e-4 &
                   \textit{lr}: 1; $\textit{lr}_{pr}$: 1e-4 \\
FOT      & \textit{lr}: 1e-4; $\epsilon_{fot}$: 0.9; $\epsilon_{inc}$: 0.0 &
                   \textit{lr}: 1e-4; $\epsilon_{fot}$: 0.9; $\epsilon_{inc}$: 0.0 &
                   \textit{lr}: 1e-4; $\epsilon_{fot}$: 0.9; $\epsilon_{inc}$: 0.0 \\
    LoRM        & \textit{lr}: 1e-2; \textit{r}: 1; $\gamma$: (0, 0.3) &
                   \textit{lr}: 3e-2; \textit{r}: 1; $\gamma$: (0, 0.3) &
                   \textit{lr}: 3e-2; \textit{r}: 1; $\gamma$: (0, 0.3) \\
        \midrule \rowcolor{gray!10}  \textbf{\method{}} & \textit{lr}: 1e-4; $z_{\text{thr}}$: 4.5 ; $\lambda_S$: 0.2 ; $k_{\text{pct}}$: 0.1\ & \textit{lr}: 1e-4; $z_{\text{thr}}$: 4.5 ; $\lambda_S$: 0.1 ; $k_{\text{pct}}$: 0.05& \textit{lr}: 1e-4 ; $z_{\text{thr}}$: 4.5 ; $\lambda_S$: 0.1 ; $k_{\text{pct}}$: 0.1\\
\bottomrule
\end{tabular}}}
\label{tab:cub_hyperparam}
\end{table*}

\begin{table*}[t] \footnotesize
\caption{\textbf{20-NewsGroups hyperparameters.}}
\renewcommand{\arraystretch}{0.8}
\centering
\setlength{\tabcolsep}{0.75em}{
\adjustbox{max width=1.0\textwidth}{
\begin{tabular}{llll} \toprule
\textbf{Distrib.} $\boldsymbol{\beta}$ & $1.0$ & $0.5$ & $0.2$ \\
\midrule
    RegMean     & \textit{lr}: 3e-3; $\gamma$: (0.5, 0.5) &
                   \textit{lr}: 3e-3; $\gamma$: (0.5, 0.5) &
                   \textit{lr}: 3e-3; $\gamma$: (0.5, 0.5) \\
    CCVR        & \textit{lr}: 3e-4 & \textit{lr}: 3e-4 & \textit{lr}: 1e-3 \\
    CODA-P & \textit{lr}: 1e-3 & \textit{lr}: 1e-3 & \textit{lr}: 1e-3 \\
    LoRM        & \textit{lr}: 1e-2; \textit{r}: 2; $\gamma$: (0, 0.5) &
                   \textit{lr}: 1e-2; \textit{r}: 1; $\gamma$: (0, 0.5) &
                   \textit{lr}: 1e-2; \textit{r}: 1; $\gamma$: (0, 0.1) \\
        \midrule \rowcolor{gray!10}  \textbf{\method{}} & \textit{lr}: 3e-3; $z_{\text{thr}}$: 4.5 ; $\lambda_S$: 0.1 ; $k_{\text{pct}}$: 0.05\ & \textit{lr}: 3e-3; $z_{\text{thr}}$: 4.5 ; $\lambda_S$: 0.2 ; $k_{\text{pct}}$: 0.05& \textit{lr}: 3e-3; $z_{\text{thr}}$: 4.5 ; $\lambda_S$: 0.2 ; $k_{\text{pct}}$: 0.05\\
\bottomrule
\end{tabular}}}
\label{tab:20ng_hyperparam}
\end{table*}

\begin{table*}[t] \footnotesize
\caption{\textbf{CLINC-150 hyperparameters.}}
\renewcommand{\arraystretch}{0.8}
\centering
\setlength{\tabcolsep}{0.75em}{
\adjustbox{max width=1.0\textwidth}{
\begin{tabular}{llll} \toprule
\textbf{Distrib.} $\boldsymbol{\beta}$ & $1.0$ & $0.5$ & $0.2$ \\
\midrule
    RegMean     & \textit{lr}: 3e-3; $\gamma$: (0.5, 0.5) &
                   \textit{lr}: 3e-3; $\gamma$: (0.5, 0.5) &
                   \textit{lr}: 3e-3; $\gamma$: (0.5, 0.5) \\
    CCVR        & \textit{lr}: 3e-4 & \textit{lr}: 3e-4 & \textit{lr}: 1e-3 \\
    CODA-P & \textit{lr}: 1e-3 & \textit{lr}: 1e-3 & \textit{lr}: 1e-3 \\
    LoRM        & \textit{lr}: 1e-2; \textit{r}: 2; $\gamma$: (0, 0.5) &
                   \textit{lr}: 1e-2; \textit{r}: 1; $\gamma$: (0, 0.5) &
                   \textit{lr}: 1e-2; \textit{r}: 1; $\gamma$: (0, 0.1) \\
        \midrule \rowcolor{gray!10}  \textbf{\method{}} & \textit{lr}: 3e-3; $z_{\text{thr}}$: 4.5 ; $\lambda_S$: 0.2 ; $k_{\text{pct}}$: 0.05\ & \textit{lr}: 5e-3; $z_{\text{thr}}$: 4.5 ; $\lambda_S$: 0.1 ; $k_{\text{pct}}$: 0.05 & \textit{lr}: 2e-3; $z_{\text{thr}}$: 4.5 ; $\lambda_S$: 0.2 ; $k_{\text{pct}}$: 0.05\\
\bottomrule
\end{tabular}}}
\label{tab:clinc_hyperparam}
\end{table*}

\end{document}